\renewcommand{\Notice@String}{%
\footnotemark[1]Corresponding author: \texttt{jiayul11@illinois.edu}. \\
\footnotemark[2]Work done while working at Betterdata AI and National University of Singapore. \\
Proceedings of the 29\textsuperscript{th} International Conference on Artificial Intelligence and Statistics (AISTATS) 2026, Tangier, Morocco.
PMLR: Volume 300. Copyright 2026 by the author(s).}
\newtheorem{lemma}{Lemma}
\newtheorem{theorem}{Theorem}
\newtheorem{remark}{Remark}
\newtheorem{proposition}{Proposition}
\newcommand{\frameworkname}{TabTreeFormer}
\newcommand{\framework}{TabTreeFormer\xspace}
\newcommand{\edit}{}
\begin{document}

%

%
\runningauthor{Li, Zhao, Zhao, Javaid, Yee, Sikdar}

\renewcommand{\thefootnote}{\fnsymbol{footnote}}

\twocolumn[

\aistatstitle{{TabTreeFormer}: Tabular Data Generation Using Hybrid Tree-Transformer}

\aistatsauthor{
Jiayu Li\footnotemark[1]\footnotemark[2] \\
{\normalfont \small University of Illinois, Urbana-Champaign}
\And
Bingyin Zhao\footnotemark[2] \\
{\normalfont \small Pixocial Technology}
\And
Zilong Zhao \\
{\normalfont \small Betterdata AI}
\AND
Uzair Javaid \\
{\normalfont \small Betterdata AI}
\And
Kevin Yee \\
{\normalfont \small Betterdata AI}
\And
Biplab Sikdar \\
{\normalfont \small National University of Singapore}
}
\vspace{0.15in}
]




\begin{abstract}
Transformers have shown impressive results in tabular data generation. However, they lack domain-specific inductive biases
which are critical for preserving the intrinsic characteristics of tabular data. They also suffer from poor scalability and efficiency due to quadratic computational complexity. In this paper, we propose \frameworkname, a hybrid transformer architecture that integrates inductive biases of tree-based models (e.g., non-smoothness and non-rotational invariance) to effectively handle the discrete and weakly correlated features in tabular datasets. To improve numerical fidelity and capture multimodal distributions, we introduce a novel tokenizer that learns token sequences based on the complexity of tabular values. This reduces vocabulary size and sequence length, yielding more compact and efficient representations without sacrificing performance. We evaluate \framework on nine diverse datasets, benchmarking against eight generative models. We show that \framework consistently outperforms baselines in utility, fidelity, and privacy metrics with competitive efficiency. Notably, in scenarios prioritizing data utility over privacy and efficiency, the best variant of \framework delivers a 44\% performance gain relative to its baseline variant. Our code is available at: \url{https://github.com/li-jiayu-ljy/tabtreeformer}.

\end{abstract}
\section{Introduction}
\label{sec:intro}
Tabular data is a prevalent data modality in real-world applications (e.g., healthcare~\citep{healthcare}, financial services~\citep{financial}, etc.), yet are heavily under-exploited due to privacy concerns~\citep{gdpr}. Fortunately, synthetic data offers an alternative to the utilization of tabular data by modeling the characteristics of real data and reducing the risk of data breach~\citep{pdpc}, thus drawing considerable attention in recent years. 

State-of-the-art (SOTA) research shows that transformers such as autoregressive transformers~\citep{great}, masked transformers~\citep{tabmt}, and diffusion models with transformers~\citep{tabsyn} have achieved impressive performance in tabular data generation, allowing synthetic data to empower a variety of fields~\citep{syn-healthcare,financial}. However, unlike the application of transformers in other research areas such as computer vision~\citep{vit} and natural language processing~\citep{transformer}, existing transformer models for synthetic tabular data often overlook domain-specific priors (i.e., inductive biases). For example, CvT~\citep{cvt} introduces convolutional embedding and projection to transformers to boost vision performances, and Transformer-XL~\citep{transformer-xl} introduces segment-level recurrence to transformers to capture longer-term dependencies. While vision and language models have enjoyed the performance boost introduced by domain-specific priors, less exploration
has been conducted to leverage them in tabular generative transformers. Moreover, transformers suffer from poor scalability due to quadratic computational complexity. This raises two interesting questions:
\begin{enumerate}
    \item What inductive biases are beneficial to the quality of synthetically generated tabular data?
    \item How can one exploit these inductive biases to improve the generative transformer?
\end{enumerate}

\begin{figure}
    \centering
    \includegraphics[width=\linewidth,trim={8em 49em 8em 6em},clip]{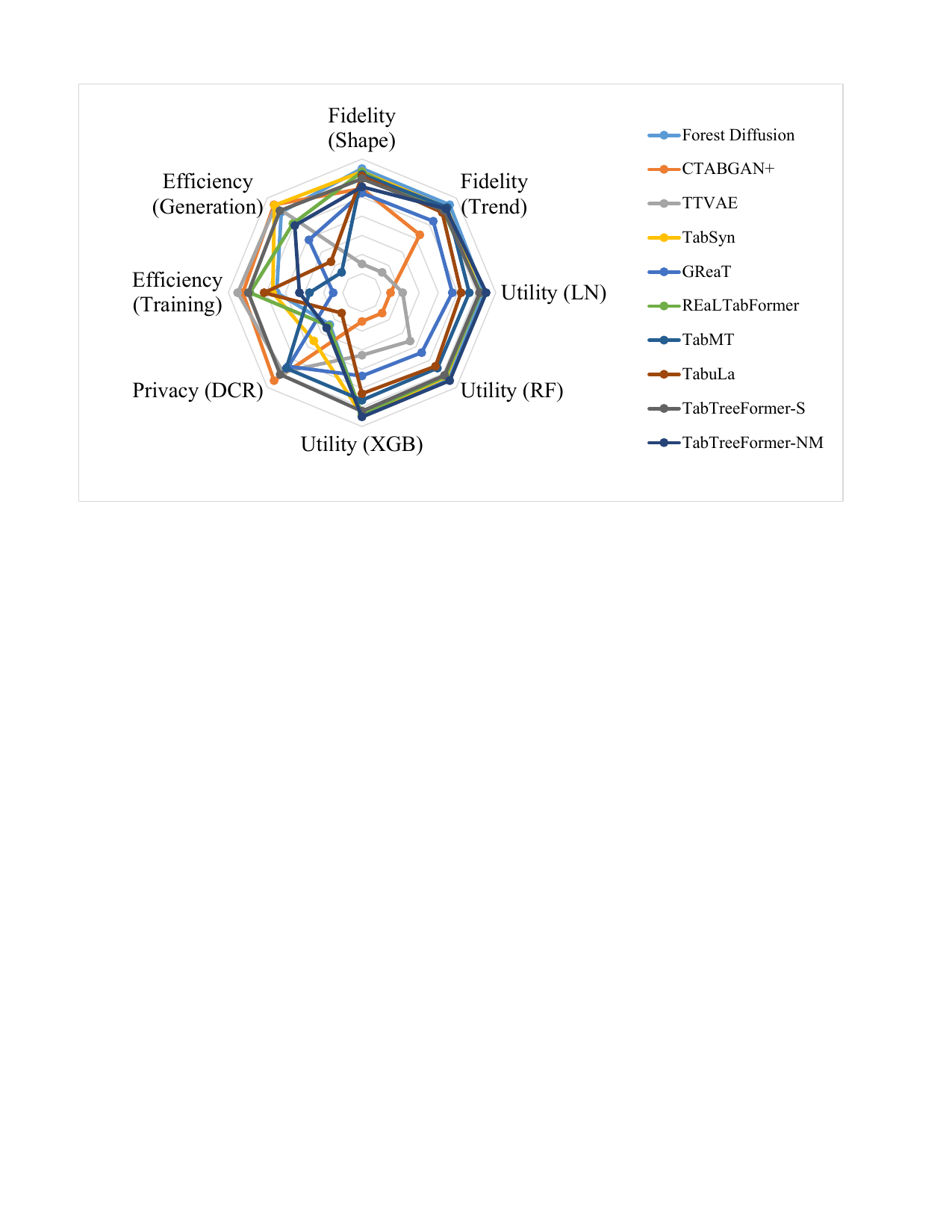}
    \caption{Performance comparison of \framework (Ours) with SOTA tabular generative models in utility, fidelity, privacy, and efficiency metrics. \framework-S achieves the best balance as the \ul{only big near-regular octagon}, and \framework-NM achieves the best utility.
    }
    \label{fig:radar}
\end{figure}

To answer the questions, we propose \frameworkname, a hybrid transformer incorporating a tree-based model and a novel tokenizer to handle tabular-specific inductive biases. We also leverage the limited per-dimension semantic meaning (i.e., each dimension corresponds to at most one feature) of typical tabular data to enable a significant reduction in model size with our tokenizer design. 

Tree-based models excel at tabular classification and regression tasks~\citep{tree-table,revisit}, which is attributed to their inductive biases that can capture tabular characteristics such as \textit{non-smoothness} and potentially \textit{low-correlation}~\citep{tab-vs-nn}. Non-smoothness is due to the existence of discrete features and non-smooth relations between discrete and/or continuous features. Trees effectively model these by learning piecewise constant functions, unlike neural networks that typically learn smoother, low-frequency functions~\citep{low-freq}. Low-correlated features, often uninformative, contribute minimally to downstream tasks such as classification and regression. Due to the non-rotationally invariant nature of tree-based models~\citep{rotinv}, trees are more robust against low-correlated features, while neural networks are biased towards stronger correlations everywhere. Inspired by the success of tree models on tabular tasks, we propose to employ such priors to facilitate the performance on tabular generative tasks.

However, tree models do not inherently capture \textit{multimodal distributions} in continuous features (i.e., features whose probability density functions have multiple modes or peaks). This limitation poses a significant challenge for tabular generative modeling. Inspired by~\citep{ctgan} that addresses this issue via multimodal decomposition, we propose dual-quantization tokenization for transformers. The first quantization uses K-Means clustering~\citep{kmeans} to model multimodal distributions, while the second employs a separate quantile-based quantization to achieve more precise representation of numerical values. Unlike typical word tokens, quantized tokens possess ordinal relationships. To effectively capture these relationships, we design and implement custom embeddings and loss functions.

Our contributions are summarized as follows: 
\begin{itemize} [topsep=0pt, partopsep=0pt, itemsep=0pt, parsep=0pt,leftmargin=*]
    \item To the best of our knowledge, we are the first to introduce tabular-specific inductive biases to transformers with a tree-based model for improving table generation. 
    \item We propose a dual-quantization tokenizer with ordinal-aware embeddings and loss functions, enabling transformers to model multimodal continuous features and generate high-quality synthetic data.
    \item Compared to 8 baselines across 9 datasets, \framework achieves an outstanding balance in quality, privacy, and efficiency, as shown in Fig.~\ref{fig:radar}. It also achieves up to 54\% and 44\% gain over the best deep and general baselines respectively, when prioritizing utility over other metrics.
\end{itemize}

\section{Related Works}
\label{sec:background}
\subsection{Tabular Data Generation}
\label{sec:background:tdg}
Early works on tabular data generation use MLPs and CNNs as backbone architectures with GAN and VAE as generation methods~\citep{tablegan,ctgan,ctabgan,ctabganp}. Recent works now use transformers with auto-regression, masked modeling, and diffusion as generative paradigms~\citep{great,tabsyn,tabmt}. 
For example, TabMT~\citep{tabmt} employs a masked transformer with ordered embedding and achieves good utility and scalability in downstream tasks. TabSyn~\citep{tabsyn} encodes tabular data into a latent space and generates tabular data using a diffusion model, demonstrating impressive fidelity and utility. Despite their enhanced performance, they overlook inductive biases to capture non-smoothness and low-correlated features that are crucial to preserving the intrinsic characteristics of tabular data. Moreover, \citet{ctgan} highlight that capturing {multimodal distribution} in continuous features is a critical challenge for tabular generative models. CTGAN~\citep{ctgan} uses variational Gaussian mixture model~\citep{vgm} to decompose multimodal values, while TabDiff~\citep{tabdiff} introduces a multimodal stochastic sampler. In this paper, we propose a tokenizer that models the multimodal distribution and injects this prior into an auto-regressive transformer for improved tabular data generation.

\subsection{Tree-based Models for Tabular Data}
\label{sec:background:tbm}

Tree-based models like XGBoost~\citep{xgboost}, LightGBM~\citep{lgbm}, and CatBoost~\citep{catboost} dominate tabular tasks due to their strong inductive biases, enabling high predictive performance, efficiency, and scalability.
While transformer-based models~\citep{tabtransformer,tabnet,tabpfn} show promise, they often underperform due to lack of such biases~\citep{tree-table,revisit}. Tree-based generative models~\citep{arf,utrees,fdiff}, though efficient, struggle with synthetic data quality or suffer from privacy leakage risks. In this paper, we bridge the gap by combining tree-based priors with transformers to inject tabular-specific inductive bias and enhance tabular generative performance.

\subsection{Tabular Tokenizers}
Transformers require a tokenizer to convert tabular data into suitable input tokens. Some methods map data to a continuous space by redesigning embedding layers~\citep{tabnet,ple,tabsyn}, while others tokenize into discrete sequences with minimal embedding changes~\citep{great,realtabformer,tabmt,tabula}, which align better with NLP practices. In this paper, we adopt the latter to optimize and simplify tabular tokenization.
This approach converts continuous values to discrete tokens, which have 
ordinal relations, such that the greatness of the token IDs can be compared. 
To capture this relation, two aspects where a model can be modified to cater for the ordinal token space are: i) embedding (input, also output for causal language models) and ii) loss (output).

\paragraph{Ordinal Embedding.}
One of the most well-known continuous embedding methods is the positional embedding based on trigonometric functions~\citep{transformer} that has been adopted for ordinal tokens~\citep{cat2vec}. However, their periodic nature is better suited for language positions than general ordinal data. Other methods often rely on non-ordinal embeddings~\citep{ordinal-constraint,tabmt} or require high-precision raw values~\citep{ple,tabmt}. We propose a function-based ordinal embedding that sidesteps both demands.

\paragraph{Ordinal Loss.}
Prior work on ordinal regression often targets a small number of classes~\citep{ord-reg,ord-emd,coral,corn}, or modifies cross-entropy loss to account for token distances~\citep{sol,slace,coling,ord-entropy}. Building on core ideas from prior work, we propose a custom loss tailored to our setting. 

\begin{figure*}
\centering
\centering    
\includegraphics[width=1\linewidth]{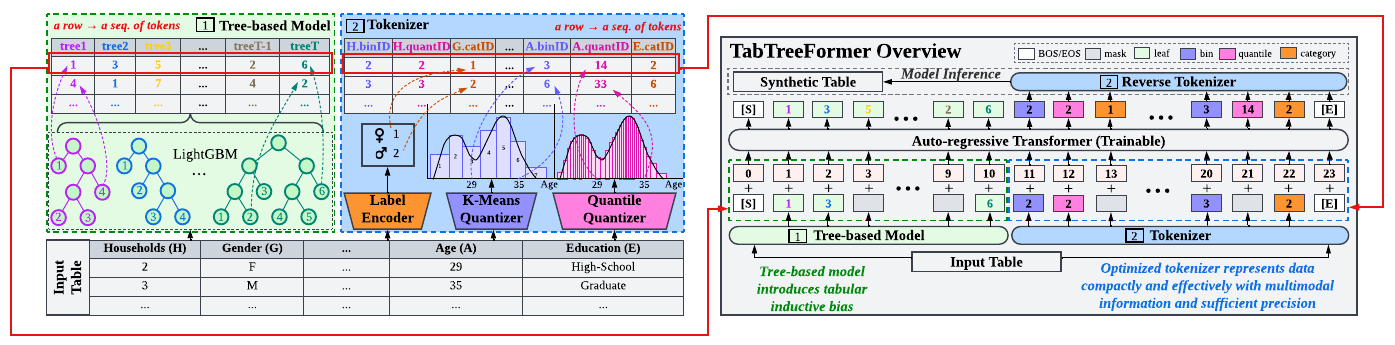} 
\caption{Overview of \framework (data flow: left $\rightarrow$ right, bottom $\rightarrow$ top). It consists of 3 components: i) \textbf{a tree-based model} that introduces tabular-specific inductive biases; ii) \textbf{a tokenizer} that efficiently and compactly represents data while capturing multimodal distributions; iii) \textbf{a transformer model} that learns the priors extracted from the tree and tokenizer to generate  synthetic data. 
} \label{fig:overview} 
\end{figure*}
\section{\frameworkname}
\label{sec:algo}

Our goal is to improve generative modeling of tabular data by introducing domain-specific inductive biases. Let $\mathbf{X}$ be the training dataset with $n$ rows, $m_d$ discrete, and $m_c$ continuous features. To generate synthetic tabular data $\mathbf{X}'$ similar to $\mathbf{X}$, we propose \framework, a model composed of three components: a tree-based model, a tokenizer, and a transformer, as shown in Fig.~\ref{fig:overview}. The tree model encodes tabular-specific inductive biases; the tokenizer captures multimodal distributions while reducing vocabulary and sequence length; and the transformer learns priors from both to generate high-quality synthetic data. More details on training and generation are provided in Appendix.

\subsection{Tree-based Model}
\label{sec:algo:tbm}

To inject tabular-specific inductive biases, we incorporate a tree-based model into the transformer (``Tree-based Model'' in Fig.~\ref{fig:overview}) by augmenting each row's token sequence with leaf indices from a fitted tree model $\mathcal{T}$ with $T$ trees. 
Let $l_k\in\mathbb{N}$ be the number of leaves in $k$-th tree, where $k\in\{1,\dots,T\}$.  In tree $\mathcal{T}$, the leaf index matrix $\mathbf{J} = [j_{ik}] \in \mathbb{N}^{n \times T}$ captures the position of each row $\mathbf{X}_i$ in each tree. These indices encode non-smooth and non-rotationally invariant structure, enabling the model to possess these inductive biases of tabular data. By prepending leaf indices to input tokens, we transfer tree-based inductive biases to the transformer. During inference, these indices also act as prompts and guide towards more realistic data generation.

\edit{
Many early tabular generation frameworks use conditional generation~\citep{ctgan,ctabgan}, which improves the generation quality noticeably.
These conditions are based on the values of a specific column in the dataset.
In TabTreeFormer, we extend the condition of generation to clusters of the data and allow multiple concurrent conditions in place.
Conceptually, each tree of a well-trained tree-based model provides an informative clustering of the training set, and the leaf index matrix provides a number of such clusterings.
Therefore, providing the leaf indices as prompts, serving as conditions for generation, where multiple concurrent conditions (clusters) are provided and each condition features the clustering on one tree, is expected to be an effective generation performance booster.
}

\begin{table*}[t]
    \centering
    \footnotesize
    \caption{Illustration of tokens in \frameworkname. ``Size'' indicates the number of tokens of this type. Recall Fig.~\ref{fig:overview} for the their colors and usage. ``N'' in ``Format'' can be any index in the range; e.g., leaf 3 ($3<n_l$) is represented as [leaf3], corresponding to the light green tokens in Fig.~\ref{fig:overview} with value 3.}
    \label{tab:tokens}
    \setlength{\tabcolsep}{1pt}
    \resizebox{\linewidth}{!}{
    \begin{tabular}{lccp{0.35\linewidth}p{0.28\linewidth}}
        \toprule
        Type (Alias) & Size & Format & Source & Description \\
        \midrule
        Leaf & $n_l=\max_{i\in\{1,2,\dots,T\}}l_i$ & \begin{minipage}{0.07\linewidth}
                \centering
                \includegraphics[width=\linewidth,trim={107pt 103pt 114pt 15pt},clip]{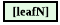}
             \end{minipage} & Tree-based model $\mathcal{T}$ & Leaf index in a tree\\
        Categorical (cat) & $n_c=\max_{i\in\{1,2,\dots,n\}}c_i$ & \begin{minipage}{0.07\linewidth}
                \centering
                \includegraphics[width=\linewidth,trim={208pt 103pt 13pt 15pt},clip]{Fig/tokens.pdf}
             \end{minipage} & Data tokenizer $\mathcal{D}$ (categorical columns) & Category ID \\
        Cluster (bin) & $n_b=\max_{i\in\{1,2,\dots,n\}}b_i$ & \begin{minipage}{0.07\linewidth}
                \centering
                \includegraphics[width=\linewidth,trim={142pt 103pt 79pt 15pt},clip]{Fig/tokens.pdf}
             \end{minipage} & Data tokenizer $\mathcal{D}$ (numeric columns) & Bin ID in K-Means quantizer \\
        Quantile (quant) & $n_q=\max_{i\in\{1,2,\dots,n\}}q_i$ & \begin{minipage}{0.07\linewidth}
                \centering
                \includegraphics[width=\linewidth,trim={175pt 103pt 46pt 15pt},clip]{Fig/tokens.pdf}
             \end{minipage} & Data tokenizer $\mathcal{D}$ (numeric columns) & Quant. ID in quant. quantizer \\
        Special & 3 & \begin{minipage}{0.21\linewidth}
                \centering
                \includegraphics[width=\linewidth,trim={22pt 103pt 142pt 15pt},clip]{Fig/tokens.pdf}
             \end{minipage} & Classical special tokens for LMs & BOS, EOS, and mask tokens\\
        \bottomrule
    \end{tabular}
    }
\end{table*}

\subsection{Tokenizer}
\label{sec:algo:tok}
To model multimodal distributions and reduce both vocabulary size and sequence length, we design a tailored dual-quantization tokenizer for tabular data (``Tokenizer'' in Fig.~\ref{fig:overview}). For each continuous feature, we quantize the values so that it is of a more natural data format for transformers to learn. However, unlike TabMT~\citep{tabmt} that uses a single quantizer, we use a dual-quantizer (i.e., double quantizers) to capture multimodal continuous distribution and describe the precise values respectively. Transformers do not expect near-Gaussian-distributed input tokens, so we \ul{capture the multimodal distribution simply by a K-Means quantizer}~\citep{kmeans}\ul{ with a small number of clusters (e.g., $K=10$)}. Following that, \ul{the original value is then quantized into a large number of quantiles (e.g., $Q=1000$), based on which original values are recovered, and thus the values are highly precise}.
Employing the dual-quantization tokenization scheme, each numeric value is encoded into two discrete values: a bin ID in the K-Means quantizer and a quantile ID in the quantile quantizer, allowing two tokens to represent a numeric value with valuable and sufficient information. Categorical features are simply tokenized by label encoding, requiring 1 token.
Formally, a reversible data tokenizer $\mathcal{D}$ can be fitted on $\mathbf{X}$. Let $c_i\in\mathbb{N},b_i\in\{0,\dots,K\},q_i\in\{0,\dots,Q\}$ denote the number of categories, K-Means bins, and quantiles in $i$-th column respectively, where $c_i=0$ in continuous features and $b_i=q_i=0$ in categorical features. Then, the domain for a categorical value is $\{1,\dots,c_i\}\subseteq\mathbb{N}^+$, and the domain for a continuous value is $\{1,\dots,b_i\}\times\{1,\dots,q_i\}\subseteq(\mathbb{N}^+)^2$.
A summary is seen in Table~\ref{tab:tokens}.

\subsection{Transformer}
\label{sec:model:lm}

\framework is simply trained as an auto-regressive transformer. 
The input token sequence is constructed by concatenating the outcome from the tree model and tokenizer. It requires 5 types of tokens and a vocabulary size of $V=n_l+n_c+n_b+n_q+3$. The number of tokens in a sequence corresponding to one row is $L=2+T+m_d+2m_c$, including BOS and EOS tokens.
\ul{Both the vocabulary size and sequence length are much smaller than the need for natural language models}. A shared set of leaf tokens across all trees and similarly shared sets of category, bin, and quantile tokens across all features are used. Distinctions between different trees and features are effectively encoded through positional information, leveraging the positional embeddings inherent in most transformers.

A key observation on the construction of token sequence is that the valid range of tokens at each position is fixed for a given table. Consequently, tokens can be sampled exclusively from this precomputed set of valid options during generation. This ensures that every generated token sequence corresponds to a valid row, \ul{eliminating the need for rejecting invalid sequences}~\citep{great,realtabformer,tabula}\ul{, thus significantly accelerating the sampling process}.

Additionally, transformers are prone to memorization, particularly when trained on tabular data, where datasets are much smaller compared to typical natural language corpora~\citep{great}. \ul{To avoid memorization, we heavily mask} the input and maintain the target output unmasked. 
Moreover, we evenly split the training set into 2 subsets, one neural network (i.e., transformer in TabTreeFormer) is trained on one subset and validated on the other.
Consequently, \ul{the validation loss can be a criterion for early stopping to avoid overfitting}, too.
When generating samples, we sample from the two networks with equal probability.
\begin{theorem}
\label{thm:split}
    Given a real dataset $\mathbf{X}$, train $M$ generative models ($\mathcal{G}_1,\mathcal{G}_2,\dots,\mathcal{G}_M$) on an $M$-partition of it ($\mathbf{X}=[\mathbf{X}^{[1]};\mathbf{X}^{[2]};\dots;\mathbf{X}^{[M]}]$). If the generators are well-trained (distribution $p$ of the corresponding partition is learned), then to sample $\mathbf{X}'$, sample data from $\mathcal{G}_i$ for a probability of $\left|\mathbf{X}^{[i]}\right|/|\mathbf{X}|$ where $|\cdot|$ means the number of rows, we will have the resulting $p(\mathbf{X}')=p(\mathbf{X})$.
\end{theorem}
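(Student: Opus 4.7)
The plan is to recognize the statement as essentially a law-of-total-probability / mixture-distribution identity, and to unfold the sampling procedure explicitly so the equality drops out. I will first fix notation: let $p(x)$ denote the (row-wise) distribution induced by $\mathbf{X}$, and for each $i\in\{1,\dots,M\}$ let $p^{[i]}(x)$ denote the distribution induced by the partition $\mathbf{X}^{[i]}$. The weights are $w_i := |\mathbf{X}^{[i]}|/|\mathbf{X}|$, which by definition of a partition satisfy $w_i\ge 0$ and $\sum_{i=1}^{M} w_i = 1$.

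Next I would write the sampling procedure as a two-step draw: first choose an index $I\in\{1,\dots,M\}$ with $\Pr[I=i]=w_i$, then draw $X'\sim \mathcal{G}_I$. By the well-trained assumption, conditional on $I=i$ we have $X'\sim p^{[i]}$. Marginalising $I$ out via the law of total probability,
\begin{equation*}
p(\mathbf{X}') \;=\; \sum_{i=1}^{M} \Pr[I=i]\, p^{[i]}(x) \;=\; \sum_{i=1}^{M} \frac{|\mathbf{X}^{[i]}|}{|\mathbf{X}|}\, p^{[i]}(x).
\end{equation*}
The remaining step is to identify the right-hand side with $p(x)$. For any measurable set $A$, the empirical mass of $\mathbf{X}$ on $A$ decomposes as $|\{r\in\mathbf{X}: r\in A\}| = \sum_{i=1}^{M} |\{r\in\mathbf{X}^{[i]}: r\in A\}|$ since the partition is disjoint and exhaustive. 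Dividing by $|\mathbf{X}|$ and recognising $|\{r\in\mathbf{X}^{[i]}: r\in A\}|/|\mathbf{X}^{[i]}|$ as $p^{[i]}(A)$, one obtains $p(A) = \sum_i w_i\, p^{[i]}(A)$, which matches the display above and closes the argument.

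The one subtle point (and the main place where a careful reader might object) is the meaning of ``distribution $p$ of the corresponding partition is learned''. The statement only goes through if this is interpreted as $\mathcal{G}_i$ reproducing $p^{[i]}$ exactly; any slack (finite-sample estimation error, regularisation bias, mode dropping, etc.) introduces an approximation error term $\sum_i w_i \|\mathcal{G}_i - p^{[i]}\|$ that would need bounding. I would therefore state this idealisation explicitly as an assumption in the proof preamble, so that the conclusion is a clean equality rather than a bound. I do not expect any technical obstacle beyond this modelling caveat, as the mathematical content is a single application of the law of total probability together with the additivity of the empirical measure over a disjoint partition.
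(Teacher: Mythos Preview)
Your proposal is correct and matches the paper's own treatment: the paper states that the result ``can be trivially proven by the chain rule'' and omits the proof, which is exactly the law-of-total-probability / mixture decomposition you spell out. Your explicit handling of the partition additivity and the caveat about the ``well-trained'' assumption are welcome clarifications beyond what the paper provides.
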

Theorem~\ref{thm:split} validates a generalized case of the method with two evenly split subsets described above, and can be trivially proven by the chain rule, so the proof is omitted.
The rest of the training and generation settings are similar to the training and generation of classical causal language models, except for a modified embedding and loss function to account for the ordinal relations between quantile tokens, with details introduced in next Section.

\subsection{Embedding and Loss Function for Ordinal Tokens}
\label{sec:algo:loss}

\paragraph{Quantile Embeddings (QE) with Ordinal Relations.}
The known ordinal relation between quantile tokens resembles the known relative sequential relation between positions. Therefore, analogous to function-generated positional embeddings~\citep{transformer}, we use function-generated embeddings for quantile tokens. 
Unlike positional tokens, which emphasize relative values, quantile tokens focus on absolute values, so we \ul{replace periodic trigonometric functions with non-linear monotonic scaled sigmoid functions for embedding generation}. 
The embedding value generators are provided in Equation~\ref{eq:bin-emb}, where $i\in\{0,\dots,Q-1\}$ stands for the quantile ID, and $d\in\{0,\dots,D-1\}$ stands for the embedding dimensions.
More detailed intuition and description of the function are provided in Appendix.

\begin{align}
  S_d &= \bigg\lfloor \frac{1+\sqrt{1+4d}}{2} \bigg\rfloor \in \mathbb{N}^+
  & \text{(scale factor)} \label{eq:bin-scale}
\end{align}
\vspace{-1.2em}
\begin{align}
  O_d &= \frac{-4S_d^3+(4d+2)S_d}{2S_d-1} \in [-2S_d,2S_d]
  & \text{(offset)} \label{eq:bin-offset}
\end{align}
\vspace{-1.2em}
\begin{align}
     QE_{id}&=\mathrm{sigmoid}\left(4S_d\left(\frac{i}{Q}-\frac{1}{2}\right)+O_d\right)
    \label{eq:bin-emb}
\end{align}

\begin{theorem}
    Following the quantile embedding values in Equation~\ref{eq:bin-emb}, and let the embedded vector of quantile $i$ be $\mathbf{q}_i=\begin{pmatrix}QE_{i0}&\cdots&QE_{i(D-1)}\end{pmatrix}\in(0,1)^D$.
    For any $p\ge1$, given quantile IDs $i,j,k\in\{0,\dots,Q-1\}$, and $j,k$ are on the same side of $i$ (i.e., $(i-j)(i-k)\ge0$), then $|i-j|<|i-k|$ if and only if $\|\mathbf{q}_i-\mathbf{q}_j\|_p<\|\mathbf{q}_i-\mathbf{q}_k\|_p$, where $\|\cdot\|_p$ stands for the $p$-norm of a vector. 
    \label{thm:emb}
\end{theorem}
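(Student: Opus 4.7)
The plan is to reduce the claim about $p$-norms in $\mathbb{R}^D$ to a per-coordinate monotonicity fact. First I would establish that, for every fixed dimension $d \in \{0, \dots, D-1\}$, the map $i \mapsto QE_{id}$ is strictly increasing in $i$. This is immediate from Equation~\ref{eq:bin-emb}: the argument of the sigmoid is affine in $i$ with slope $4S_d / Q > 0$ (since $S_d \in \mathbb{N}^+$ by Equation~\ref{eq:bin-scale}), and the sigmoid is strictly increasing on all of $\mathbb{R}$. The offset $O_d$ merely shifts the argument and does not affect monotonicity in $i$, and the scale $S_d$ only stretches it.

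Next I would leverage the ``same side'' hypothesis $(i-j)(i-k) \ge 0$ to convert the scalar distance comparison into a coordinate-wise difference comparison. Assume without loss of generality that $j, k \ge i$; the other case is symmetric. Per-coordinate monotonicity gives $QE_{jd} - QE_{id} \ge 0$ and $QE_{kd} - QE_{id} \ge 0$ for every $d$, so the absolute values drop and become signed differences. If $|i-j| < |i-k|$ (equivalently $j < k$ in this case, possibly with $j = i$), strict monotonicity of each coordinate map yields $|QE_{id} - QE_{jd}| < |QE_{id} - QE_{kd}|$ for every $d$.

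The forward implication then follows by raising this strict coordinate-wise inequality to the $p$-th power (preserved for nonnegative quantities and $p \ge 1$), summing over $d$, and taking the $p$-th root, each of which preserves strict inequality. The reverse implication I would handle by trichotomy: applying the very same chain of implications to the cases $|i-j| = |i-k|$ and $|i-j| > |i-k|$ yields $\|\mathbf{q}_i - \mathbf{q}_j\|_p = \|\mathbf{q}_i - \mathbf{q}_k\|_p$ and $\|\mathbf{q}_i - \mathbf{q}_j\|_p > \|\mathbf{q}_i - \mathbf{q}_k\|_p$ respectively, so the $p$-norm ordering must mirror the integer-distance ordering and the converse follows.

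I do not expect a serious obstacle: the theorem is essentially the monotonicity of the sigmoid transported through a $p$-norm, and the more elaborate-looking formulas for $S_d$ and $O_d$ play no role beyond guaranteeing $S_d > 0$. The one place care is needed is the explicit use of the same-side hypothesis, which is precisely what prevents sign cancellation when passing from the signed coordinate differences to their absolute values — without it, two points on opposite sides of $i$ could have equal $p$-norm distance while differing in $|i - \cdot|$, and the biconditional would fail.
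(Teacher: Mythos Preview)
Your proposal is correct and follows essentially the same route as the paper's proof: establish strict coordinate-wise monotonicity of $i \mapsto QE_{id}$ from the positive slope $4S_d/Q$ and the strict monotonicity of the sigmoid, lift the resulting per-coordinate inequality to the $p$-norm via monotonicity of $x \mapsto x^p$ and the $p$-th root, and obtain the converse by exhausting the remaining order cases. You are in fact more careful than the paper on one point: the paper glosses the step from $|i-j|<|i-k|$ to $|QE_{id}-QE_{jd}|<|QE_{id}-QE_{kd}|$ by citing ``monotonicity and triangle inequality,'' whereas you correctly pinpoint that this step relies on the same-side hypothesis to remove sign ambiguity.
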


Theorem~\ref {thm:emb} shows that the trend of the distance between the embedded vectors is consistent with the difference between quantile IDs. 
The mathematical proof is provided in Appendix.
At different embedding dimensions, we apply different slopes (by scale factor) and intercepts (by offset) on the input to $\mathrm{sigmoid}$.
Note that while the meaning of positions in a transformer is fixed, the interpretation of quantiles can vary across datasets. As a result, instead of fixing the embedding values as in Equation~\ref{eq:bin-emb}, we initialize the values by the equations, and they are updated during training.
Non-quantile tokens will use a typical word token embedding layer of transformers.

\begin{table*}[t]
    \centering
    \caption{
    Averaged MLE performance of different models on all 9 datasets. RE stands for relative error (w.r.t. score on real data). Avg. stands for the average raw MLE score. 
    The best scores and the second best scores are highlighted in bold with and without underscore, respectively. 
    }
    \label{tab:ml-utility-summary}
    \setlength{\tabcolsep}{1pt}
    \resizebox{\linewidth}{!}{{
    \renewcommand{\arraystretch}{1.1}
    
\begin{tabular}{ll>{\columncolor{gray!20}}c c c c c c c c c >{\columncolor{red!10}}c >{\columncolor{red!10}}c >{\columncolor{red!10}}c}
\toprule
 & ML & Real & FD & CTAB+ & TTVAE & TabSyn & GReaT & RTF & TabMT & TabuLa\tablefootnote{For TabuLa, 2/9 datasets fail to generate reasonable data under the default setting.} & TTF-S & TTF-L & TTF-NM \\
\midrule
\multirow{4}{*}{RE\tablefootnote{The reported utility improvement in abstract and introduction is computed from this row.} ($\downarrow$)} & all & \multirow{4}{*}{-} & $\boldsymbol{0.020}_{\pm 0.033}$ & $0.253_{\pm 0.287}$ & $0.203_{\pm 0.335}$ & $0.024_{\pm 0.030}$ & $0.117_{\pm 0.178}$ & $0.027_{\pm 0.042}$ & $0.062_{\pm 0.103}$ & $0.041_{\pm 0.051}$ & $0.031_{\pm 0.040}$ & $0.021_{\pm 0.035}$ & $\boldsymbol{\underline{0.011}}_{\pm 0.021}$ \\
 & LN & & $\boldsymbol{0.014}_{\pm 0.027}$ & $0.231_{\pm 0.260}$ & $0.249_{\pm 0.476}$ & $0.025_{\pm 0.036}$ & $0.095_{\pm 0.131}$ & $0.023_{\pm 0.033}$ & $0.056_{\pm 0.098}$ & $0.037_{\pm 0.065}$ & $0.025_{\pm 0.035}$ & $0.019_{\pm 0.032}$ & $\boldsymbol{\underline{0.007}}_{\pm 0.015}$ \\
 & RF & & $0.020_{\pm 0.027}$ & $0.256_{\pm 0.293}$ & $0.184_{\pm 0.275}$ & $0.019_{\pm 0.024}$ & $0.127_{\pm 0.202}$ & $0.025_{\pm 0.037}$ & $0.064_{\pm 0.107}$ & $0.045_{\pm 0.049}$ & $0.031_{\pm 0.037}$ & $\boldsymbol{0.018}_{\pm 0.045}$ & $\boldsymbol{\underline{0.013}}_{\pm 0.027}$ \\
 & XGB & & $0.026_{\pm 0.044}$ & $0.272_{\pm 0.337}$ & $0.175_{\pm 0.242}$ & $0.028_{\pm 0.031}$ & $0.130_{\pm 0.210}$ & $0.034_{\pm 0.056}$ & $0.067_{\pm 0.116}$ & $0.041_{\pm 0.046}$ & $0.036_{\pm 0.050}$ & $\boldsymbol{0.026}_{\pm 0.032}$ & $\boldsymbol{\underline{0.014}}_{\pm 0.020}$ \\
\midrule
\multirow{4}{*}{Avg. ($\uparrow$)} & all & $0.899_{\pm 0.104}$ & $\boldsymbol{0.884}_{\pm 0.122}$ & $0.685_{\pm 0.285}$ & $0.747_{\pm 0.312}$ & $0.880_{\pm 0.119}$ & $0.807_{\pm 0.215}$ & $0.878_{\pm 0.129}$ & $0.853_{\pm 0.168}$ & $0.840_{\pm 0.129}$ & $0.875_{\pm 0.126}$ & $0.881_{\pm 0.113}$ & $\boldsymbol{\underline{0.889}}_{\pm 0.106}$ \\
 & LN & $0.891_{\pm 0.125}$ & $\boldsymbol{0.881}_{\pm 0.140}$ & $0.695_{\pm 0.272}$ & $0.719_{\pm 0.405}$ & $0.872_{\pm 0.145}$ & $0.818_{\pm 0.200}$ & $0.874_{\pm 0.143}$ & $0.851_{\pm 0.182}$ & $0.835_{\pm 0.163}$ & $0.872_{\pm 0.141}$ & $0.875_{\pm 0.135}$ & $\boldsymbol{\underline{0.885}}_{\pm 0.126}$ \\
 & RF & $0.901_{\pm 0.104}$ & $\boldsymbol{0.885}_{\pm 0.119}$ & $0.686_{\pm 0.289}$ & $0.757_{\pm 0.286}$ & $0.884_{\pm 0.109}$ & $0.801_{\pm 0.233}$ & $0.881_{\pm 0.127}$ & $0.852_{\pm 0.171}$ & $0.839_{\pm 0.124}$ & $0.875_{\pm 0.125}$ & $0.883_{\pm 0.105}$ & $\boldsymbol{\underline{0.888}}_{\pm 0.100}$ \\
 & XGB & $0.906_{\pm 0.094}$ & $\boldsymbol{0.886}_{\pm 0.122}$ & $0.673_{\pm 0.326}$ & $0.765_{\pm 0.262}$ & $0.883_{\pm 0.113}$ & $0.803_{\pm 0.236}$ & $0.880_{\pm 0.131}$ & $0.854_{\pm 0.171}$ & $0.848_{\pm 0.116}$ & $0.878_{\pm 0.127}$ & $0.885_{\pm 0.112}$ & $\boldsymbol{\underline{0.895}}_{\pm 0.101}$ \\
\bottomrule
\end{tabular}

    }}
    \vspace{-1em}
\end{table*}
\begin{table*}[t]
    \centering
    \caption{Averaged fidelity performance of different models on all 9 datasets. The best scores and the second best scores are highlighted in bold with and without underscore, respectively.}
    \label{tab:ml-fidelity-summary}
    \setlength{\tabcolsep}{1pt}
    \resizebox{\linewidth}{!}{

\begin{tabular}{lcccccccc>{\columncolor{red!10}}c>{\columncolor{red!10}}c>{\columncolor{red!10}}c}
\toprule
 & FD & CTAB+ & TTVAE & TabSyn & GReaT & RTF & TabMT & TabuLa & TTF-S & TTF-L & TTF-NM \\
\midrule
 Shape & $\boldsymbol{\underline{0.931}}_{\pm 0.047}$ & $0.890_{\pm 0.067}$ & $0.736_{\pm 0.150}$ & $\boldsymbol{0.925}_{\pm 0.052}$ & $0.881_{\pm 0.062}$ & $0.924_{\pm 0.067}$ & $0.919_{\pm 0.052}$ & $0.916_{\pm 0.076}$ & $0.910_{\pm 0.037}$ & $0.915_{\pm 0.042}$ & $0.894_{\pm 0.081}$ \\
 Trend & $\boldsymbol{\underline{0.922}}_{\pm 0.064}$ & $0.813_{\pm 0.122}$ & $0.678_{\pm 0.219}$ & $0.911_{\pm 0.084}$ & $0.862_{\pm 0.088}$ & $0.899_{\pm 0.091}$ & $0.912_{\pm 0.065}$ & $0.894_{\pm 0.085}$ & $0.901_{\pm 0.069}$ & $\boldsymbol{0.913}_{\pm 0.076}$ & $0.908_{\pm 0.098}$ \\
\bottomrule
\end{tabular}

    }
    \vspace{-1em}
\end{table*}

\paragraph{Ordinal Cross-Entropy Loss.}
Cross-entropy loss (CEL) is typically used as the optimization objective of transformers. However, the quantized quantile tokens retain an inherent ordinal property, where closer IDs correspond to closer values. This relationship is not captured by standard CEL, where all classes as equally distinct. To address this, we replace the vanilla CEL with a specialized ordinal CEL for quantile tokens. 
Formally, let the predicted logits for a token be $\mathbf{z} \in \mathbb{R}^V$, the probability after softmax be $\mathbf{p}\in[0,1]^V$ ($\|\mathbf{p}\|_1=1$), and the target label be $t \in \{1, \dots, V\}$.
We define \textit{ordinal cross-entropy loss} (OCEL) as a weighted version of CEL. These weights are applied to unnormalized probabilities in both the numerator and denominator, depending on the current- and target-class pairs. It contrasts with the classical weighted CEL, which weights the loss per class based solely on the target class and applies to the numerator only. Formally, we write it as Equation~\ref{eq:sce} (recall the classical CEL in Equation~\ref{eq:ce}), where $w_{ti}$ denotes the weight for $z_i$, where $t$ is the target class.

\begin{equation}
    \label{eq:ce}
    \mathcal{L}_{\text{ce}}(\mathbf{z}, t)=-\log\frac{e^{z_t}}{\sum_{i=1}^{V}e^{z_i}}=-\log p_t
\end{equation}

\begin{equation}
    \label{eq:sce}
    \mathcal{L}_{\text{oce}}(\mathbf{z}, t)=-\log\frac{w_{tt}e^{z_t}}{\sum_{i=1}^Vw_{ti}e^{z_i}}
\end{equation}
\begin{lemma}
    \label{thm:opt}
    Given $\mathbf{w}>\mathbf{0}$ independent of $\mathbf{z}$, OCEL is optimized when $z_t\to\infty$ and $z_i\to-\infty,\forall i\ne t$.
\end{lemma}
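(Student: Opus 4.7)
The plan is to reformulate OCEL in a way that makes its dependence on the target logit and the non-target logits monotonically transparent, read off the infimum directly, and verify that the specified limit attains it.

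First, I would pull the target term out of the denominator and rewrite the loss as
$$\mathcal{L}_{\text{oce}}(\mathbf{z},t) \;=\; \log\!\left(1 + \sum_{i\ne t}\frac{w_{ti}}{w_{tt}}\,e^{z_i - z_t}\right).$$
Because $\mathbf{w}>\mathbf{0}$ is independent of $\mathbf{z}$, every ratio $w_{ti}/w_{tt}$ is a strictly positive constant, so every summand $(w_{ti}/w_{tt})\,e^{z_i-z_t}$ is strictly positive. Hence the argument of the logarithm exceeds $1$, giving the pointwise lower bound $\mathcal{L}_{\text{oce}}(\mathbf{z},t)>0$, and thus $\inf_{\mathbf{z}}\mathcal{L}_{\text{oce}}(\mathbf{z},t)\ge 0$.

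Second, I would verify this infimum is attained along the stated limiting direction. Sending $z_t\to\infty$ and $z_i\to-\infty$ for each $i\ne t$ makes $z_i - z_t\to-\infty$, so each exponential $e^{z_i-z_t}\to 0$. Since the sum over $i\ne t$ is finite, it vanishes in the limit, and continuity of $\log$ yields $\mathcal{L}_{\text{oce}}\to\log(1)=0$, matching the lower bound. Moreover, the reformulation shows the loss is monotonically decreasing in $z_t$ and monotonically increasing in each $z_i$ ($i\ne t$), so no direction other than the one specified can do better.

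The only subtlety worth flagging is that the optimum is an infimum approached in the limit rather than a minimum attained at finite logits, so ``optimized'' here should be read in that sense. Beyond this observation there is no hard step: once OCEL is recognized as $\log(1+\text{positive weighted sum of exponentials of } z_i-z_t)$, both the lower bound and its attainment follow immediately from positivity of $\mathbf{w}$ together with monotonicity of $\log$ and $\exp$.
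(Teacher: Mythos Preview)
Your proof is correct. It differs from the paper's argument in mechanics: the paper computes the partial derivatives $\partial\mathcal{L}_{\text{oce}}/\partial z_t<0$ and $\partial\mathcal{L}_{\text{oce}}/\partial z_i>0$ for $i\ne t$ directly and reads off the monotonicity from the signs, whereas you first algebraically rewrite the loss as $\log\bigl(1+\sum_{i\ne t}(w_{ti}/w_{tt})e^{z_i-z_t}\bigr)$ so that the monotonicity in each coordinate is visible by inspection, and then identify the infimum $0$ explicitly and show it is approached along the stated direction. Your route is slightly more informative in that it names the optimal value and clarifies that the optimum is an infimum rather than an attained minimum; the paper's derivative computation is more mechanical but arrives at the same monotonicity conclusion. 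Either argument is fully adequate for this lemma.
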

\vspace{-0.5em}
\begin{theorem}
    \label{thm:adj}
    Let $f(|t-i|)$ be a variant of the distance between current class $i$ and target class $t$ when $f$ is non-negative and monotonically increasing.
    Let the weighted sum of this variant of distances to target incurred by some logit $\mathbf{z}$ be $D(\mathbf{z})=\sum_{i=1}^Vp_iw_i$.
    If $w_{ti}=f(|t-i|)$ for some $f$, then
    for $\mathbf{z},\mathbf{z}^*$ with $\mathcal{L}_{\text{ce}}(\mathbf{z},t)=\mathcal{L}_{\text{ce}}(\mathbf{z}^*,t)$ and $D(\mathbf{z})<D(\mathbf{z}^*)$, we must have $\mathcal{L}_{\text{oce}}(\mathbf{z},t)<\mathcal{L}_{\text{oce}}(\mathbf{z}^*,t)$. 
\end{theorem}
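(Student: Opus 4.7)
The idea is that \(\mathcal{L}_{\text{oce}}\) decomposes additively into three pieces: a constant in \(\mathbf{z}\) (depending only on \(w_{tt}\)), the standard cross-entropy \(\mathcal{L}_{\text{ce}}(\mathbf{z},t)\), and \(\log D(\mathbf{z})\). Once that decomposition is in place, the hypothesis \(\mathcal{L}_{\text{ce}}(\mathbf{z},t)=\mathcal{L}_{\text{ce}}(\mathbf{z}^*,t)\) makes the first two pieces cancel in the comparison, so the inequality \(D(\mathbf{z})<D(\mathbf{z}^*)\) transfers directly through the monotonicity of \(\log\).

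\textbf{Step 1: rewrite OCEL in terms of softmax probabilities.} Let \(Z=\sum_{j=1}^V e^{z_j}\) so that \(p_i=e^{z_i}/Z\). Dividing numerator and denominator of the argument of the logarithm in Equation~\ref{eq:sce} by \(Z\) gives
\[
\mathcal{L}_{\text{oce}}(\mathbf{z},t)= -\log\frac{w_{tt}p_t}{\sum_{i=1}^V w_{ti}p_i}= -\log\frac{w_{tt}p_t}{D(\mathbf{z})},
\]
since \(w_i=w_{ti}\) in the definition of \(D\). Now split the logarithm:
\[
\mathcal{L}_{\text{oce}}(\mathbf{z},t) = -\log w_{tt} - \log p_t + \log D(\mathbf{z}) = -\log w_{tt} + \mathcal{L}_{\text{ce}}(\mathbf{z},t) + \log D(\mathbf{z}).
\]

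\textbf{Step 2: apply the hypotheses.} The term \(-\log w_{tt}\) depends only on the fixed target \(t\) and the fixed weights, so it is identical for \(\mathbf{z}\) and \(\mathbf{z}^*\). Subtracting the identities at \(\mathbf{z}\) and \(\mathbf{z}^*\) and using \(\mathcal{L}_{\text{ce}}(\mathbf{z},t)=\mathcal{L}_{\text{ce}}(\mathbf{z}^*,t)\) yields
\[
\mathcal{L}_{\text{oce}}(\mathbf{z},t)-\mathcal{L}_{\text{oce}}(\mathbf{z}^*,t) = \log D(\mathbf{z}) - \log D(\mathbf{z}^*).
\]
Because the weights \(w_{ti}=f(|t-i|)\) are positive (taking Lemma~\ref{thm:opt}'s standing assumption \(\mathbf{w}>\mathbf{0}\)) and the \(p_i\) are positive, both \(D(\mathbf{z})\) and \(D(\mathbf{z}^*)\) are strictly positive, so \(\log\) is defined and strictly increasing. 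The assumption \(D(\mathbf{z})<D(\mathbf{z}^*)\) therefore gives \(\log D(\mathbf{z})<\log D(\mathbf{z}^*)\), and hence \(\mathcal{L}_{\text{oce}}(\mathbf{z},t)<\mathcal{L}_{\text{oce}}(\mathbf{z}^*,t)\), as required. Note that the statement uses only that \(w\) is a function of \(|t-i|\); monotonic-increasing \(f\) is what justifies interpreting \(D\) as a meaningful ``expected distance to the target'' in practice, but is not needed for the algebraic inequality itself.

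\textbf{Expected obstacle.} There is essentially no hard step: the theorem is a clean consequence of the multiplicative structure of Equation~\ref{eq:sce} combined with the multiplicativity of softmax, so everything collapses to an algebraic identity plus monotonicity of \(\log\). The only minor subtlety worth flagging in the write-up is the positivity of \(w_{tt}\) and of \(D\); these ensure all logarithms are finite and all comparisons are strict. If one wanted to accommodate \(f(0)=0\), the statement would need to be restricted to \(f>0\) on \(\{1,\dots,V-1\}\) and a convention for \(w_{tt}\); but under the already-standing \(\mathbf{w}>\mathbf{0}\) assumption from Lemma~\ref{thm:opt} this is not an issue.
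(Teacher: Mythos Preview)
Your proof is correct and follows essentially the same approach as the paper: rewrite $\mathcal{L}_{\text{oce}}$ as $-\log\frac{w_{tt}p_t}{D(\mathbf{z})}$, then use $\mathcal{L}_{\text{ce}}(\mathbf{z},t)=\mathcal{L}_{\text{ce}}(\mathbf{z}^*,t)\Rightarrow p_t=p_t^*$ together with $D(\mathbf{z})<D(\mathbf{z}^*)$ and monotonicity of $\log$. Your explicit additive decomposition $\mathcal{L}_{\text{oce}}=-\log w_{tt}+\mathcal{L}_{\text{ce}}+\log D$ is a slightly more verbose packaging of the same identity, and your remarks on positivity and on the non-essential role of monotone $f$ are accurate side observations.
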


Both Lemma~\ref{thm:opt} and Theorem~\ref{thm:adj} are intuitive, and we provide rigorous proofs in Appendix. Theorem~\ref{thm:adj} implies that the \ul{OCEL penalizes (value being smaller) classes closer to the target class less than farther ones}. 
Any definition of $w_{ti}=f(|t-i|)$ satisfying the monotonicity constraint works for OCEL technically.
In this paper, we define the weight $w_{ti}$ for $z_i$ to be obtained by Equation~\ref{eq:weight}.
\begin{equation}
    \label{eq:weight}
    w_{ti}=1+m-e^{-\frac{(t-i)^2}{(V\sigma)^2}}
\end{equation}
where $\sigma=0.005$ is a scaling factor of the distance, and $m=0.5$ is the minimum weight (used on $i=t$). Detailed explanation and analysis of it can be found in Appendix.

Note that not all tokens are part of the ordinal relation, so the OCEL is applied only to valid quantile tokens at the corresponding positions. To enable the model to learn to generate valid quantile tokens, we introduce an additional modified CEL to distinguish valid tokens from invalid ones, which complements OCEL. 
See Appendix for the exact overall loss formula.

\section{Experiments}
\label{sec:exp}

\subsection{Experimental Setup}

\begin{figure*}[t]
    \begin{minipage}[t]{0.75\linewidth}
        \centering
    
\includegraphics[width=\linewidth,trim={0 0.5em 0 0.2em},clip]{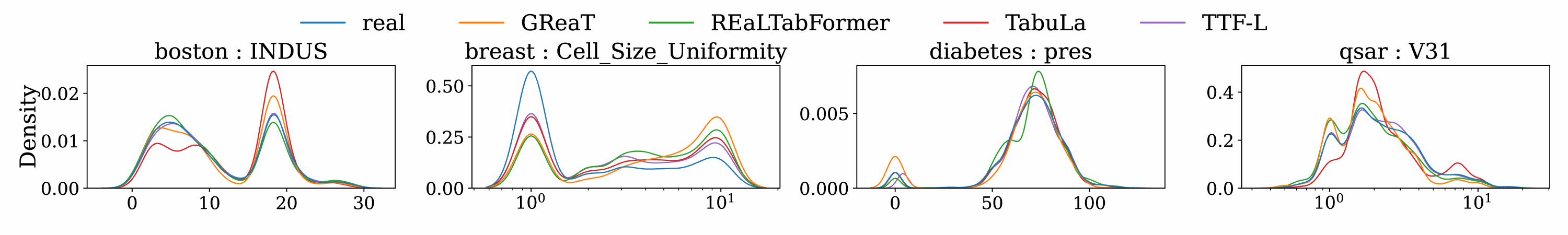}
    \caption{
    Marginal densities of representative multimodal continuous columns from baseline ART and TTF. All have a Distill-GPT2 backbone.
    }
    \label{fig:multimodal}
    
    \includegraphics[width=\linewidth]{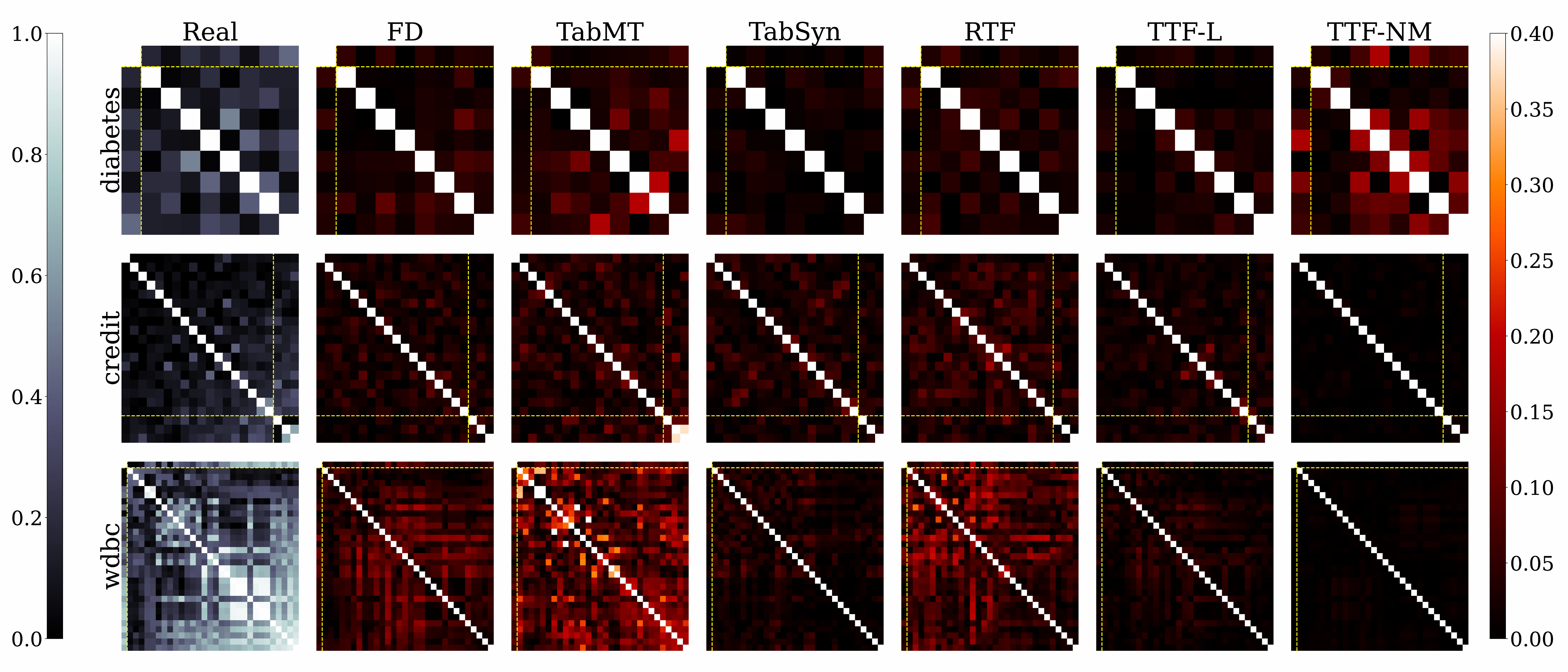}
    \caption{TTF versus top-4  baselines in pair-wise correlation. Real \ul{(absolute) correlation} values are presented on the left, and the \ul{absolute error} in correlation values in synthetic data from different models, capped at 0.4 for visibility, are shown at the right (\ul{the darker the better}). The left-top features (divided by yellow line) are categorical and the right-bottom are numeric.}
    \label{fig:corr}
    \end{minipage}%
    \hfill
    \begin{minipage}[t]{0.23\linewidth}
        \centering
    \vspace{-4.7em}
    \includegraphics[width=\linewidth,trim={18pt 20pt 18pt 20pt},clip]{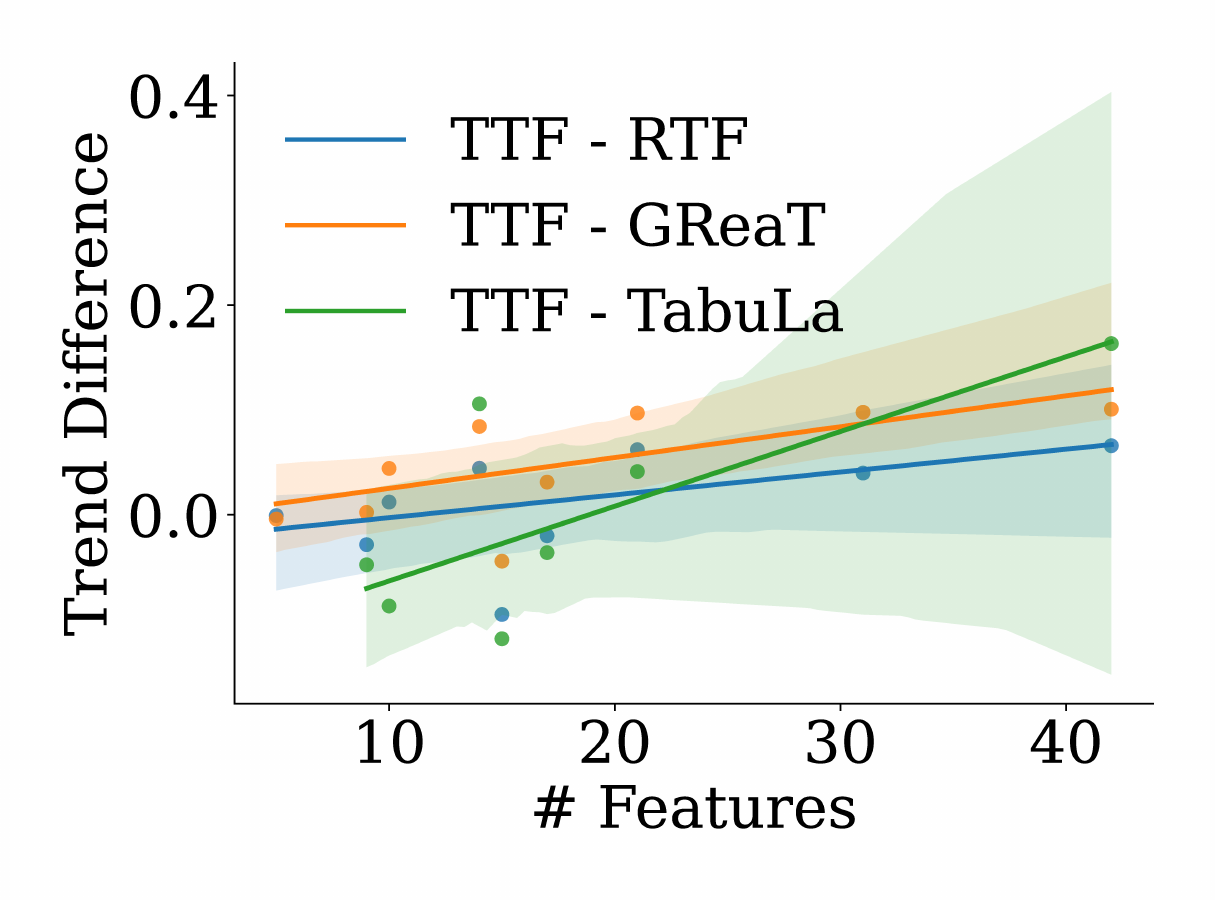}
    \caption{Feature count vs. Trend score improvement of TTF from ART baselines.}
    \label{fig:corr-trend}

    \vspace{0.7em}
    \includegraphics[width=\linewidth,trim={10pt 20pt 10pt 2em},clip]{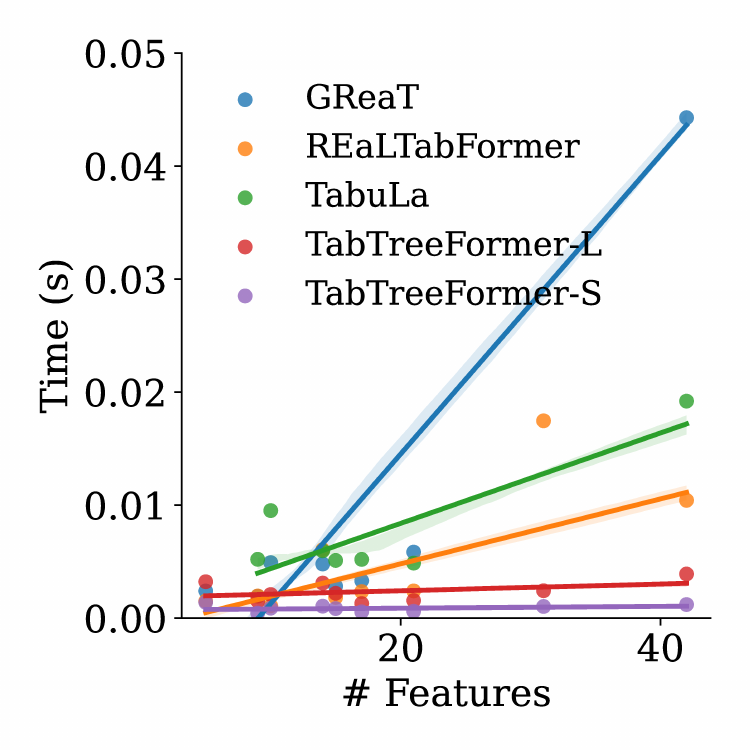}

    \caption{Number of features vs. average generation time per (valid) row of TTF and ART baselines.}
    \label{fig:gen-time}
    \end{minipage}
\end{figure*}

\label{sec:exp:quality}
\paragraph{Basic Setup.}
For all datasets, we split train and test sets in $4:1$, and repeat each experiment 3 times and report the results. 
We conduct all experiments on a machine equipped with 1 NVIDIA RTX 4090 and 20 CPU cores. Detailed environment setting can be found in Appendix.

\paragraph{\framework Configuration.}
We use  LightGBM~\citep{lgbm} whose hyperparameters are tuned by Optuna~\citep{optuna} as the tree-based model. We exploit Distill-GPT2~\citep{distilbert} as the transformer backbone, following the practice in prior works~\citep{great,realtabformer}. \frameworkname\space(TTF) is experimented with two major configurations: S (small) and L (large), with the number of trainable parameters of approximately 5M and 40M, respectively.
When privacy is not crucial concerns, we remove the masks of TTF-L, which we call TTF-NM (\textbf{n}o \textbf{m}ask).
In ablation study, we use \framework-S.
More implementation details can be found in Appendix.

\paragraph{Datasets.} Experiments are conducted on 9 datasets with diverse sizes and characteristics from OpenML~\citep{openml}: adult, bank, boston, breast, credit, diabetes, iris, qsar, and wdbc. 
Only credit and diabetes are used for ablation study. Details of datasets are summarized in Appendix.

\paragraph{Baseline Models.} We compare the performance of \framework against 8 SOTA methods in tabular data generation, including non-neural networks, GANs, VAEs, diffusion models, and auto-regressive transformers (ART) or masked transformers: Forest Diffusion (FD)~\citep{fdiff}, CTAB-GAN+ (CTAB+)~\citep{ctabganp}, TTVAE~\citep{ttvae}, TabSyn~\citep{tabsyn}, GReaT~\citep{great}, REaLTabFormer (RTF)~\citep{realtabformer}, TabMT~\citep{tabmt}, and TabuLa~\citep{tabula}. Justification of the baseline choices and implementation details are described in Appendix.

\begin{figure*}[t]
    \begin{minipage}[t]{0.35\linewidth}
        \centering
        \includegraphics[width=\linewidth,clip={0 2em 0 0}]{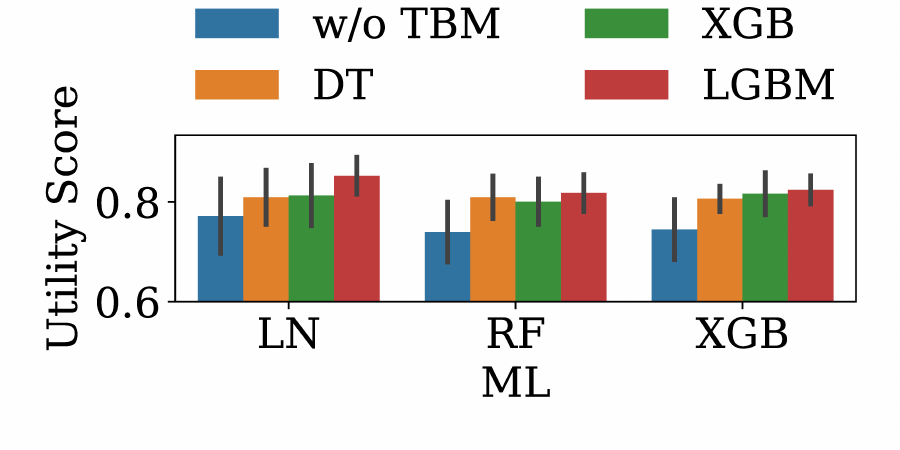}
        \vspace{-1.5em}
        \caption{Utility of different tree-based model in TTF. The $x$-axis shows the downstream model type.}
        \label{fig:tbm}
    \end{minipage}
    \hfill
    \begin{minipage}[t]{0.27\linewidth}
        \centering
        \includegraphics[width=0.9\linewidth]{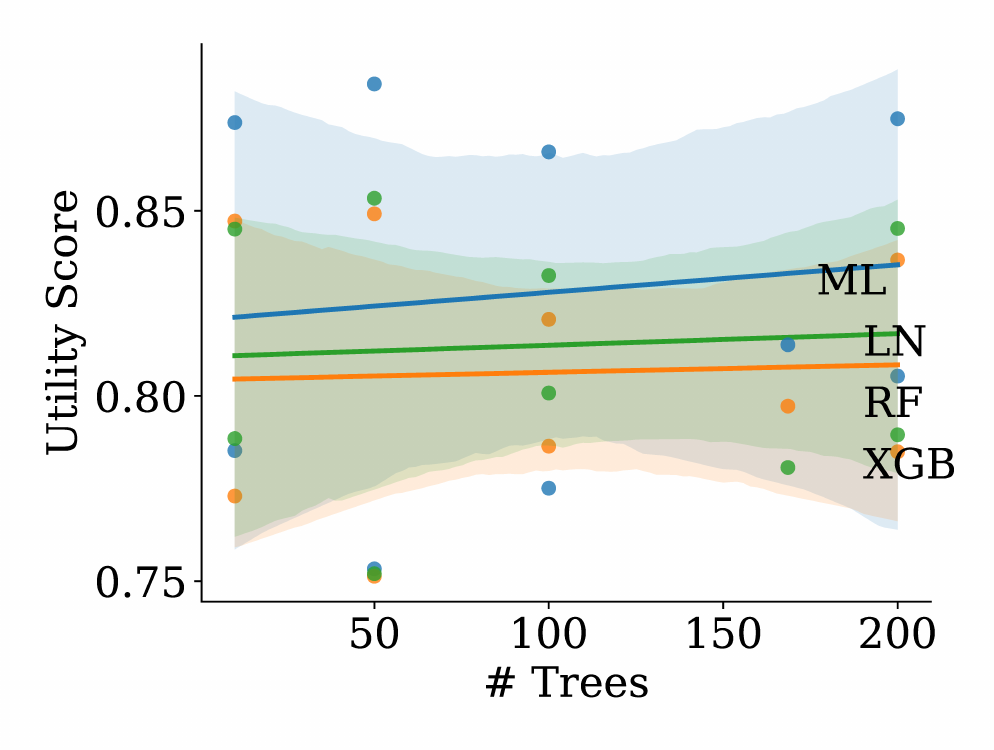}
        \vspace{-0.5em}
        \caption{Utility of different number of trees in the tree-based model in TTF.}
        \label{fig:ntrees}
    \end{minipage}
    \hfill
    \begin{minipage}[t]{0.35\linewidth}
    \centering
    \vspace{-9em}
    \captionsetup{type=table}
    \caption{Design component ablation. Average utility scores are reported. All experiments are based on variants of \framework-S.}
    \label{tab:abl-summary}
    \setlength{\tabcolsep}{2pt}
    \resizebox{0.95\linewidth}{!}{
    \begin{tabular}{lrrrr}
\toprule
ML & all & LN & RF & XGB \\
\midrule
TTF-S & 0.832 & 0.853 & 0.818 & 0.824 \\
\midrule
gen. w/o tree & 0.792 & 0.808 & 0.780 & 0.789 \\
w/o OCEL & 0.809 & 0.825 & 0.802 & 0.800 \\
w/o QE & 0.801 & 0.820 & 0.783 & 0.799 \\
\bottomrule
\end{tabular}
    }
    \end{minipage}
\end{figure*}
\paragraph{Metrics.} We evaluate \framework using utility, fidelity, privacy, and efficiency, which are standard metrics in tabular data generation~\citep{ctgan,ctabganp,great,tabsyn,tabdiff}. Detailed implementation of metrics can be found in Appendix.
\begin{itemize}[topsep=0pt, partopsep=0pt, itemsep=0pt, parsep=0pt,leftmargin=*]
    \item \textbf{Utility} exhibits the quality of synthetic data by testing its performance on downstream tasks. We evaluate based on the established Train-on-Synthetic, Test-on-Real (TSTR) machine learning efficacy (MLE) evaluation framework~\citep{ctgan}, namely, synthetic data generated based on real training data is tested on a hold-out real test set. Three models are used: logistic regression (e.g., linear--LN), random forest (RF), and XGBoost (XGB). Classification performance is evaluated by weighted AUC and regression is evaluated by $R^2$. For both metrics, \ul{higher scores indicate better performance.} MLE is the most widely used synthetic tabular data generation evaluation metric, which is reported in all baselines, 
    so we treat this as the \ul{core synthetic data quality score}.
    \item \textbf{Fidelity} shows the cosmetic discrepancy between the real and synthetic data. It is evaluated via ``Shape'' and ``Trend'' metrics ~\citep{sdmetrics,tabdiff}. ``Shape'' measures the similarity of marginal distribution density for each column, and ``Trend'' measures the fidelity in correlation between column pairs. \ul{Higher ``Shape'' and ``Trend'' values indicate better data fidelity.}
    \item \textbf{Privacy} is a crucial criterion when the synthetic data serves the use case of privacy-preserving data sharing. We use the distance to the closest record (DCR)~\citep{ctabgan} to evaluate the privacy of synthetic data. We compare the DCRs from synthetic data and from hold-out real (test) data to the real training data. Mann-Whitney U Test~\citep{mwu} is applied on these two sets of DCRs and the null hypothesis assumes the former DCRs are no smaller than the latter. \ul{Privacy is at risk of disclosure when $p$-values are smaller than 0.05.}
    \item \textbf{Efficiency} showcases the model sizes and computation time for training and generation of different models. \ul{Under comparable performance, smaller models and faster computation are favored.}
\end{itemize}

\subsection{Result Analysis}
\paragraph{Utility:}
Table~\ref{tab:ml-utility-summary} shows \framework-NM consistently outperforms all baselines on MLE, demonstrating its superior synthetic-data generation. The margin grows with stronger downstream models (XGB $>$ RF $>$ LN), underscoring \framework’s ability to capture complex relations via tree-based modeling. Results on individual datasets are detailed in Table~\ref{tab:ml-utility} in Appendix.

\paragraph{Fidelity} 
Table~\ref{tab:ml-fidelity-summary} shows the fidelity results.
\framework achieves comparable performance to baseline models in ``Shape'' metric, especially to ART-based baseline models. 
Meanwhile, Fig.~\ref{fig:multimodal} shows that \framework captures better (i.e., closer to real) multimodal distribution compared to other ARTs, which validates the effectiveness of our dual-quantization tokenizer design. 
\framework-L outperforms all neural network baselines in ``Trend'', implying the superior capability of \framework in learning inter-feature relations. 
We visualize the pair-wise correlation of some datasets in Fig.~\ref{fig:corr}.
While \framework-NM is able to capture the correlations on certain datasets perfectly, its performance variance across datasets is large.
In addition, \framework tends to show a more significant improvement from ART baselines with more features, as referred in Fig.~\ref{fig:corr-trend}. This demonstrates the advantage of introducing the inductive bias to address low-correlations so that models learn better by filtering out less correlated features, which is common for dataset with more features.

\begin{figure*}[t]
    \begin{minipage}[t]{0.32\linewidth}
        \includegraphics[width=0.85\linewidth]{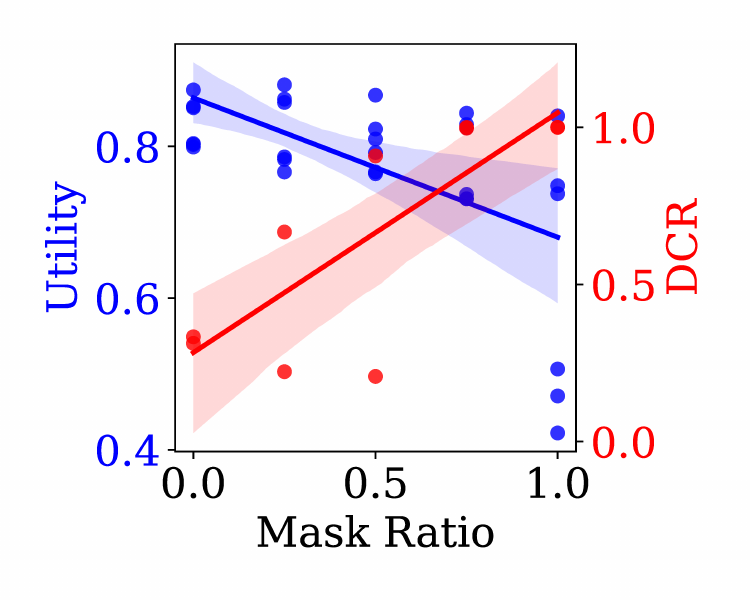}
        \caption{Quality \& privacy trend of different mask ratios.}
        \label{fig:mask-ratio}
    \end{minipage}
    \hfill
    \begin{minipage}[t]{0.32\linewidth}
        \includegraphics[width=0.85\linewidth]{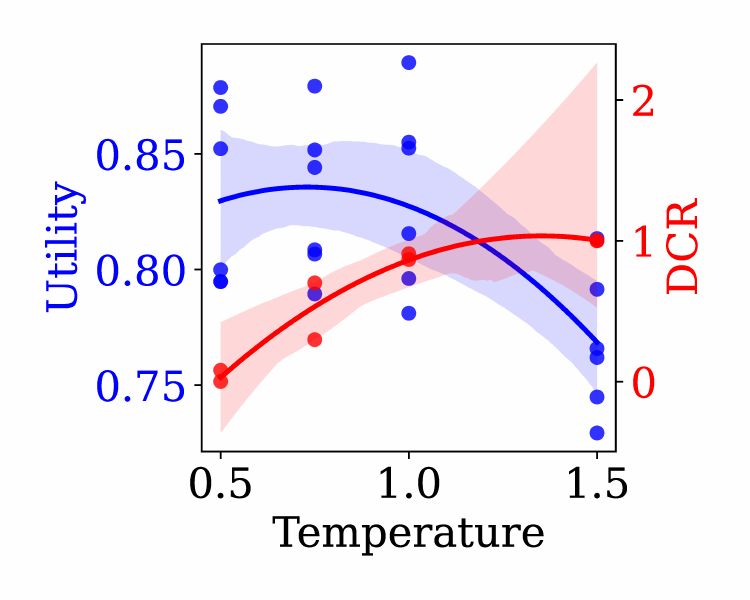}
        \caption{Quality \& privacy of trend of different temperatures.}
        \label{fig:temperature-quality}
    \end{minipage}
    \hfill
    \begin{minipage}[t]{0.32\linewidth}
        \includegraphics[width=0.85\linewidth,trim={0 0 0 2em},clip]{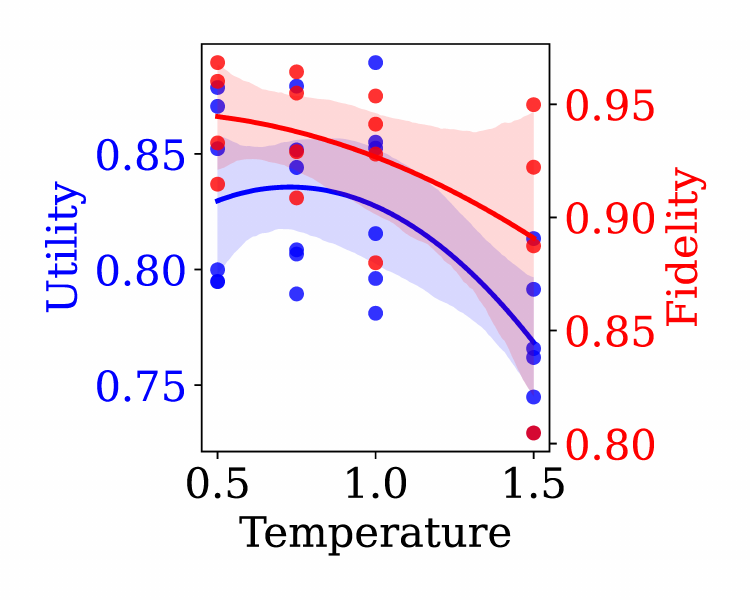}
        \caption{Utility \& fidelity of trend of different temperatures.}
        \label{fig:temperature}
    \end{minipage}
\end{figure*}

\paragraph{Privacy:}

\begin{table}[t]
  \centering
  \caption{%
    Privacy performances. The first two rows are the average and minimum $p$‐values,
    and the last row shows the \ul{number of violations} (i.e., $p<0.05$).
    The closer the $p$‐value to 0, the higher the risk of privacy leakage.
    Non‐zero violations are highlighted in red, indicating privacy risks.
  }
  \label{tab:dcr-summary}
  \setlength{\tabcolsep}{1pt}
  \resizebox{\linewidth}{!}{%
    \begin{tabular}{lcccccccc>{\columncolor{red!10}}c>{\columncolor{red!10}}c>{\columncolor{red!10}}c}
\toprule
 & FD & CTAB+ & TTVAE & TabSyn & GReaT & RTF & TabMT & TabuLa & TTF-S & TTF-L & TTF-NM \\
\midrule
Avg. $p$ & 0.380 & 1.000 & 0.904 & 0.559 & 0.843 & 0.394 & 0.865 & 0.251 & 0.933 & 0.745 & 0.417 \\
Min. $p$ & 0.000 & 1.000 & 0.382 & 0.029 & 0.082 & 0.000 & 0.259 & 0.000 & 0.662 & 0.214 & 0.000 \\
\# vio. & \textcolor{red}{\textbf{5}} & 0 & 0 & \textcolor{red}{\textbf{1}} & 0 & \textcolor{red}{\textbf{3}} & 0 & \textcolor{red}{\textbf{5}} & 0 & 0 & \textcolor{red}{\textbf{3}} \\
\bottomrule
\end{tabular}

  }
\end{table}

\begin{table}[t]
  \centering
  \caption{%
    Average computation time (s) of \frameworkname\space and  ART baselines.
    The best values are highlighted in \textbf{bold with underline}, the
    second best is in \textbf{bold without underline}.
  }
  \label{tab:train}
  \setlength{\tabcolsep}{2pt}
  \resizebox{0.8\linewidth}{!}{%
    \begin{tabular}{lccc>{\columncolor{red!10}}c>{\columncolor{red!10}}c}
\toprule
 & GReaT & RTF & TabuLa & TTF-S & TTF-L \\
\midrule
Train & 2470.344 & \textbf{359.950} & 718.260 & \textbf{\underline{316.743}} & 773.362 \\
Generate & 45.998 & 24.932 & 74.702 & \textbf{\underline{7.846}} & \textbf{19.718} \\
\bottomrule
\end{tabular}
  }
\end{table}

As summarized in Table~\ref{tab:dcr-summary}, \framework-S and L are privacy-resilient in all 9 datasets. In comparison, FD has privacy issues in 5 out of 9 datasets, which is consistent with our analysis in \textbf{Related Works} on its potential privacy issues.
As for ART baseline methods, all but GReaT demonstrates privacy risk, particularly TabuLa, which bears severe privacy leakage concerns in 5 out of 7 successfully run datasets. This result also verifies that the masking and early stopping of \framework's design successfully protects privacy.
Raw $p$-values per experiment are presented in Table~\ref{tab:dcr} in Appendix.

\paragraph{Efficiency:}

The efficiency of \framework is mainly demonstrated by its ability to achieve comparable performance with a significantly smaller model size. 
Although \framework-L is better than \framework-S in general, demonstrating the usefulness of a larger model, \framework-S achieves comparable utility with the best-performing ART baseline (REaLTabFormer, which has only marginally better MLE results than \framework-S), while the baseline models have more than 80M parameters. 

\paragraph{Efficiency:} 
\framework's computation time compared to all ART baselines is shown in Table~\ref{tab:train}. 
\framework consistently achieves faster generation.
Raw training and generation time can be found in Table~\ref{tab:time} in Appendix.

\subsection{Ablation Study}

A core design of \framework is its integration of a tree-based model (TBM). As shown in Fig.~\ref{fig:tbm}, having a tree-based model integrated is helpful. However, the exact tree-based model used does not seem to make a significant difference, though a subtle trend that multiple-tree models (XGBoost and LightGBM) are better than single-tree model (decision tree (DT)) can be observed.
Nevertheless, observing the effect of the number of trees on multiple-tree models, as shown in Fig.~\ref{fig:ntrees}, no obvious trend can be observed either, though there is still a general but very subtle trend that more trees result in better performance. 
we also evaluate the ablations of (1) tree-leaf tokens as prompt (2) OCEL and (3) quantile embedding (QE) to TabTreeFormer-S. The result is shown in Table~\ref{tab:abl-summary}. All the three components have a significant impact on improving the performance.
The mask ratio and temperature influence on \framework-NM is also evaluated. The result shows that the larger ratio is masked or the larger temperature, the better privacy and worse utility, this result is discussed at Appendix due to page limit.

\subsection{\framework-NM Settings.}
\label{app:result:nm}
Two core settings of NM different from L version is the mask ratio and temperature, so we also look at how the two parameters affect the performance, as shown in Fig.~\ref{fig:mask-ratio}-\ref{fig:temperature}. The utility and DCR changes with mask ratio and temperature demonstrate clearly the dilemma between synthetic data quality and privacy, and as expected, the larger ratio is masked or the larger temperature, the better privacy and worse utility.
Nevertheless, Fig.~\ref{fig:temperature} shows a slight different trend on performance of utility and fidelity on temperature. As we maximize the synthetic data quality mainly by utility, it is likely that the temperatures is not maximized with the fidelity, which explains the reason for \framework's score in fidelity (recall Table~\ref{tab:ml-fidelity-summary}) not being the best.

\section{Conclusion}
\label{sec:conclusion}

In this paper, we propose \framework, a hybrid tree-transformer based model for high-quality tabular data generation. It combines tree-based inductive biases with a dual-quantization tokenizer to model multimodal numeric distributions. Specialized loss and embeddings applied on quantized ordinal tokens. When evaluated on 9 datasets against 8 baselines, \framework achieves stronger balance on quality, privacy, and efficiency, and particularly excels in data utility when privacy and efficiency are not prioritized.

\bibliographystyle{apalike}
\bibliography{main}

\clearpage
\appendix
\section{Supplementary \framework Model Description and Details}
\label{app:model}
Section \textbf{End-to-end \framework Pipeline} provides the full end-to-end training and generation process, and Section \textbf{Tree-based Model} and \textbf{Transformer} provide detailed model setup of core components.

\subsection{End-to-end \framework Pipeline}
\label{app:model:algo}

As a supplementary of Fig.~\ref{fig:overview} on the overall process of \framework, and the textual description in Section \textbf{\frameworkname}, we provide the full end-to-end pseudocode in this section.
Algorithm~\ref{algo:train} and Algorithm~\ref{algo:generate} describe the end-to-end training and generation process of \framework respectively.
\begin{algorithm}[h]
   \caption{Training of \frameworkname}
   \label{algo:train}
   \begin{algorithmic}[1] 
   \footnotesize
      \STATE \textbf{Input:} Tabular dataset $\mathbf{X}$ ($n$ rows, $m_d$ discrete features, $m_c$ continuous features), $s_0$ shared initial steps, $s$ total training steps
      \STATE \textbf{Output:} Fitted tree-based model $\mathcal{T}$ and the \textit{leaf index matrices} $\mathbf{J}_1,\mathbf{J}_2$, data tokenizer $\mathcal{D}$, and transformers $\mathcal{G}_1,\mathcal{G}_2$
      \STATE $\mathcal{T} \gets \textsc{TuneAndFitTreeBasedModel}(\mathbf{X})$ \quad\quad$\blacktriangleright$ Let $T = \mathcal{T}.\textsc{n\_trees}$, $n_l = \mathcal{T}.\textsc{max\_n\_leaves}$\label{algo:line:nt}
      \STATE $\mathbf{J} \gets \mathcal{T}.\textsc{GetLeafIndex}(\mathbf{X})$ \quad\quad$\blacktriangleright$ Obtain leaf index matrix $\mathbf{J} \in \mathbb{N}^{n \times T}$
      \STATE $\mathcal{D} \gets \textsc{FitEncoder}(\mathbf{X})$ \quad\quad$\blacktriangleright$ $n_c = \mathcal{D}.\textsc{max\_n\_cat}$, $n_b = \mathcal{D}.\textsc{max\_n\_bin}$, $n_q = \mathcal{D}.\textsc{max\_n\_quant}$
      \STATE $\widetilde{\mathbf{X}} \gets \mathcal{D}.\textsc{Encode}(\mathbf{X})$ \quad\quad$\blacktriangleright$ Get cat/bin/quant IDs $\widetilde{\mathbf{X}} \in \mathbb{N}^{n \times (m_d + 2m_c)}$
      \STATE $\mathbf{Z} \gets \textsc{ToTokenIDs}([\mathbf{J}; \widetilde{\mathbf{X}}])$ \quad\quad$\blacktriangleright$ Convert these IDs to token IDs, add BOS, EOS
      \STATE $\mathcal{G} \gets \textsc{InitializeLM}(\text{n\_pos} = 2 + T + m_d + 2m_c, \text{n\_vocab} = 3 + n_l + n_c + n_b + n_q)$
      \\\quad\quad$\blacktriangleright$ Also initialize embedding layer on quantile tokens with ordinal relations
      \FOR{$i = 1, \dots, s_0$}
         \STATE $\mathbf{B} \gets \textsc{SampleBatch}(\mathbf{Z})$
         \STATE $\widetilde{\mathbf{B}} \gets \textsc{Mask}(\mathbf{B})$
         \STATE $\mathcal{G} \gets \textsc{UpdateGradient}(\text{input} = \widetilde{\mathbf{B}}, \text{target} = \mathbf{B})$
      \ENDFOR\quad\quad$\blacktriangleright$ Prepare some non-overfitted trained weights
      \STATE $[\mathbf{J}_1,\mathbf{J}_2], [\mathbf{Z}_1,\mathbf{Z}_2]\gets$\textsc{EvenlySplit}($\mathbf{J},\mathbf{Z}$)
      \STATE $\mathcal{G}_1,\mathcal{G}_2\gets$\textsc{MakeCopies}($\mathcal{G},2$)
      \FOR{$j=1,2$}
      \FOR{$i = s_0+1, \dots, s$}
         \STATE $\mathbf{B} \gets \textsc{SampleBatch}(\mathbf{Z}_j)$
         \STATE $\widetilde{\mathbf{B}} \gets \textsc{Mask}(\mathbf{B})$
         \STATE $\mathcal{G}_j \gets \textsc{UpdateGradient}(\text{input} = \widetilde{\mathbf{B}}, \text{target} = \mathbf{B})$
         \IF{\textsc{ValidateAtStep}($i$)}
            \STATE $\ell\gets\textsc{ComputeLoss}(\text{input}=\textsc{Mask}(\mathbf{Z}_{1-j}),\text{target}=\mathbf{Z}_{1-j})$\quad\quad$\blacktriangleright$ Compute validation loss
            \IF{\textsc{MetEarlyStopCriterion}($\ell$)}
            \STATE $i\gets s$\quad\quad$\blacktriangleright$ Early stop
            \ENDIF
         \ENDIF
      \ENDFOR
      \ENDFOR
   \end{algorithmic}
\end{algorithm}

\subsection{Tree-based Model}
\label{app:model:tbm}

\subsubsection{Hyperparameter Tuning}

For the tree-based classifier or regressor, we use Optuna~\citep{optuna} to tune the hyperparameters of LightGBM~\citep{lgbm} using the training data. If the dataset has a designated target column, which is the case for all our experimented datasets, the model is trained to predict that column. Otherwise, a random column can be the target.
Then, fit the model with the selected hyperparameters for \framework.

The hyperparameter space explored is
\begin{itemize}
    \item \textbf{Learning Rate}: Logarithmic scale float in $[0.01, 0.3]$;
    \item \textbf{Number of Estimators}: Integers in $[50,250]$ step 50;
    \item \textbf{Maximum Depth}: Integers in $[3,10]$;
    \item \textbf{Number of Leaves}: Integers in $[20,100]$ step 5;
    \item \textbf{Minimum Size per Leaf}: Integers in $[10,50]$ step 5;
    \item \textbf{Feature Fraction}: Float in $[0.6, 1.0]$;
    \item \textbf{Bagging Fraction}: Float in $[0.6, 1.0]$;
    \item \textbf{L1 Regularization}: Float in $[0.0,10.0]$;
    \item \textbf{L2 Regularization}: Float in $[0.0,10.0]$.
\end{itemize}
\begin{algorithm}[t]
   \caption{Tabular data generation using \frameworkname}
   \label{algo:generate}
   \begin{algorithmic}[1] 
   \footnotesize
      \STATE \textbf{Input:} Leaf index matrices $\mathbf{J}_1,\mathbf{J}_2$, data tokenizer $\mathcal{D}$, transformer-based language models $\mathcal{G}$, and number of rows generate $n'$
      \STATE $n_1,n_2\gets\lfloor n'/2\rfloor,\lceil n'/2\rceil$\quad\quad$\blacktriangleright$ Assign number of rows for each network
      \STATE \textbf{Output:} Synthetic tabular dataset $\mathbf{X}'$
      \FOR{$j=1,2$}
          \STATE $\mathbf{J}_j'\gets\textsc{Sample}(\mathbf{J}_j,n_j)$
          \STATE $\mathbf{Z}_j'\gets\textsc{Mask}(\textsc{ToTokenIDs}(\mathbf{J}_j'))$\quad\quad$\blacktriangleright$ Prepare tokens as prompts
          \STATE $\widetilde{\mathbf{X}}_j'\gets\mathcal{G}.\textsc{Generate}(\text{partial\_input}=\mathbf{Z}_j')$
          \STATE $\mathbf{X}_j'\gets\mathcal{D}.\textsc{Decode}(\widetilde{\mathbf{X}}_j')$
      \ENDFOR
      \STATE $\mathbf{X}'\gets\textsc{Combine}(\mathbf{X}_1',\mathbf{X}_2')$
   \end{algorithmic}
\end{algorithm}

Optimization was performed over 50 trials using 3-fold cross-validation, performance evaluated based on weighted F1 for classification and (negative) mean squared error for regression.

\subsubsection{Leaf Index Matrix}

In the subsequent steps of \frameworkname, leaf index matrix $\mathbf{J}$ of the real data $\mathbf{X}$ is computed. The indexing order of trees and their leaves used in the subsequent steps are directly taken from the fitted tree-based model. The leaf index matrix is essentially the result of \texttt{.apply} function for most tree-based models in \texttt{sklearn}.

\subsection{Tokenization}

For K-Means and quantile quantization, we use \verb|sklearn.preprocessing.KBinsDiscretizer| with strategy ``kmeans'' and ``quantile'' respectively. 
For categorical values, we use \verb|sklearn.preprocessing.OrdinalEncoder|.

In cases where the number of distinct values expected to be kept is way larger than typical natural language models' vocabulary size, two or more consecutive quantile quantizers may be used, where subsequent quantizers are used to quantize the values in the same quantile as suggested by previous quantizers. Thus, with $q$ quantizers, a total of $Q^q$ values can be represented, which is sufficient to cover most practical cases. In this paper, we present $Q=1000$ as sufficient precision for simplicity.

\subsection{Transformer}
\label{app:model:transformer}
\subsubsection{Model Configuration}

For the language model, we use Distill-GPT2~\citep{distilbert}, based on the GPT2 architecture~\citep{gpt2}, with several modifications. The vocabulary size and maximum sequence length are adjusted based on the dataset, and the values are likely much lower than the requirement for natural language models. For example, if $n_l=200,n_c=20,n_b=K=10,n_q=Q=1000$, then $V=1233$, and if $T=200,m_d=10,m_c=10$, then $L=222$, while natural language models typically have $V>30000,L>1000$.

In this paper, we present 3 versions of \frameworkname\space of different sizes. Table~\ref{tab:model-size} shows their details. The model sizes are obtained for diabetes~\citep{diabetes-data} dataset. Actual values may vary based on the fitted tree-based model and dataset.

\begin{table}[t]
    \centering
    \caption{Hyperparameter configuration of \frameworkname\space of different sizes.}
    \label{tab:model-size}
    \begin{tabular}{l|cc}
        \toprule
        \frameworkname & S & L \\
        \midrule
        Approx. \#Params (in M) & 5 & 40 \\
        \midrule
        hidden state dimensions & 256 & 768\\
        inner feed-forwared layer dimensions & 1024 & 3072 \\
        number of attention heads per layer & 8 & 12 \\
        \bottomrule
    \end{tabular}
\end{table}

\subsubsection{Token Sequence Construction}

The tokens in Table~\ref{tab:tokens} are conceptual tokens. When converting the leaf, category, K-Means bin, and quantile IDs into token IDs in actual implementation, we do not explicitly make the textual tokens, but instead, directly add corresponding token type's offset.

\subsubsection{Masking}

Instead of using a fixed mask ratio, we apply varying mask ratios to enhance the diversity of training data. Tree-related tokens (yellow section in Fig.~\ref{fig:overview}) are masked with a ratio sampled uniformly from $[0.5,0.75]$, while actual value tokens (blue section in Fig.~\ref{fig:overview}) are masked with a ratio sampled uniformly from $[0.25,0.5]$.
In the no-mask (NM) version, we set both mask ratios to be constant 0.

To ensure that the quantile token is always masked if its corresponding K-Means bin token is masked within the same numeric column, we first generate a random mask. If a violation is detected (i.e., the K-Means bin token is unmasked while the quantile token is masked), we swap the mask values of these two positions. This approach ensures that masks are generated both efficiently and accurately.

\subsubsection{Training Configuration}
The reduced model size allows for a significantly larger batch size compared to typical natural language models. We use a batch size of 128 per device, reduce by half in case of CUDA OOM\footnote{Among all $10\times3=30$ experiments, \frameworkname-L hits CUDA OOM in one experiment, and no other \frameworkname\space experiments had memory issues with this batch size.}, and train for up to 5,000 steps ($s=5,000$ in Algorithm~\ref{algo:train}) with a learning rate of $5 \times 10^{-4}$ in FP16 precision, utilizing the Hugging Face Trainer.
Early stopping may be triggered by validation loss with a patience of 3 every 100 steps.
Patience is set to 100 in the NM version.
The initial training on both splits (Line 9-13 in Algorithm~\ref{algo:train}) is trained with the number of steps ($s_0$) set to the smaller of the number of steps required for 20 epochs and 1/10 of $s$.

\subsubsection{Model Inference}

Tree-leaf tokens from real data after masking are used as prompts for generation. 
Values of categorical columns are more likely to be memorized than numeric values by our ordinal learning method.
Therefore, we apply a higher temperature to the tokens for categorical values and a lower one to the tokens for numeric values.
In particular, we set the temperature for categorical tokens to be 2.0 and numeric tokens, including bin and qunatile tokens, to be 1.0.
In the NM version, we set the temperatures to be 0.2 and 0.1 respectively.

Recall that a set of valid tokens can be pre-determined at each position. Formally, let $p_i$ be the corresponding feature index for the $i$-th token from data tokenizer, and $R_i$ indicates the type of this token. For example, in Fig.~\ref{fig:overview}, $p_1=1,p_2=1,p_3=2,R_1=\text{bin},R_2=\text{quant},R_3=\text{cat}$. Then, let $\mathcal{V}_i$ be the set of tokens allowed at position $i\in\{1,2,\dots,L\}$, we have Equation~\ref{eq:range-vocab}.
\begin{equation}
  \label{eq:range-vocab}
  \resizebox{\columnwidth}{!}{$
    \mathcal{V}_i
    =\begin{cases}
      \{\mathrm{[BOS]}\}, & i=1,\\
      \{\mathrm{[leaf1]},\ldots,\mathrm{[leaf}l_{i-1}\mathrm{]}\}, & 2\le i\le T+1,\\
      \{\mathrm{[bin1]},\ldots,\mathrm{[bin}b_{p_{i-T-1}}\mathrm{]}\}, & T+2\le i\le L-1,\ R_{i-T-1}=\mathrm{bin},\\
      \{\mathrm{[quant1]},\ldots,\mathrm{[quant}q_{p_{i-T-1}}\mathrm{]}\}, & T+2\le i\le L-1,\ R_{i-T-1}=\mathrm{quant},\\
      \{\mathrm{[cat1]},\ldots,\mathrm{[cat}b_{p_{i-T-1}}\mathrm{]}\}, & T+2\le i\le L-1,\ R_{i-T-1}=\mathrm{cat},\\
      \{\mathrm{[EOS]}\}, & i=L
    \end{cases}
  $}
\end{equation}
Valid tokens for a specific position is enforced by setting the logits of invalid tokens at the position to negative infinity before sampling.

\section{Supplementary Quantile Embeddings Explanation and Analysis}
\label{app:emb}

\subsection{Preliminaries: Function-Generated Positional Embeddings}

While positional embeddings can also be a learned embedding like word tokens, \citet{transformer} found that its performance is not necessarily significantly better than function-generated positional embeddings.
Moreover, function-generated embeddings allow the model to extrapolate positional information in sentences longer than those in the training corpora.

The typical functions to generate positional embeddings written in a similar format as Equation~\ref{eq:bin-emb} (expressed in $i$ and $d$) are shown in Equation~\ref{eq:pos-emb}~\citep{transformer}.
\begin{align}
    PE_{id}=\begin{cases}
        \sin\left(\frac{i}{10000^{\frac{d}{D}}}\right)&\text{if }2\mid i\\
        \cos\left(\frac{i}{10000^{\frac{d-1}{D}}}\right)&\text{if }2\nmid i\\
    \end{cases}
    \label{eq:pos-emb}
\end{align}
The functions are essentially sine and cosine functions, where the frequency is controled by the dimension indices.
Using these periodic functions is particularly suitable for positional embeddings, which focus more on relative differences than absolute differences.

\subsection{Choice of the Core Generator Function for Quantile Embeddings}
Periodic functions are proper generator functions for positional embeddings because of the relative nature of positions.
Similarly, due to the absolute and monotonic nature of quantile values, monotonic functions are more suitable for the generator functions for quantile embeddings.

The simplest of monotonic functions are linear functions, but they suffer from the following problems:
\begin{itemize}
    \item To obtain different linear functions at different embedding dimensions, the slopes and intercepts need to be changed. However, in order for the embedded values to fall in a reasonable range (no extremely large or small values), the choice of proper slopes and intercepts becomes very limited, which hurts the diversity between different embedding dimensions. This significantly downgrades the effect of having different embedding dimensions.
    \item Having linear functions in all dimensions means that different dimensions are in strict linear relations to one another. Note that transformers are heavily dependent on linear projections. In particular, the input embeddings are linearly projected to $Q,K,V$ vectors of the first attention layer. Therefore, universal linear relations among all embedded dimensions would degenerate the power of the first few layers of the network, which is an undesired outcome.
\end{itemize}

Therefore, besides being monotonic, we also require the generator function to be non-linear and have a controled range (so ideally bounded).
In this paper, we choose sigmoid, a simple function that satisfies the non-linear monotonic and bounded requirements.
Nevertheless, we also note that other functions satisfying the requirements may also be used, such as tanh, softsign, and piece-wise linear functions.

\subsection{Construction of Quantile Embeddings}
\begin{figure}
\centering
\includegraphics[width=\linewidth]{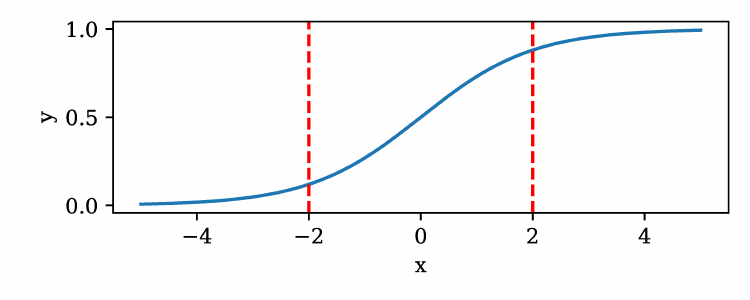}
\caption{Sigmoid function $y=\mathrm{sigmoid}(x)$. The red vertical dashed lines highlight the range $x\in[-2,2]$ where the changes of values are considered non-trivial.}
\label{fig:sigmoid}
\end{figure}

For the standard sigmoid function $\mathrm{sigmoid}(x)=\frac{1}{1+e^{-x}}$, we consider the range $[-2, 2]$ as the core range of $x$ that has non-trivial changes of values (see Fig.~\ref{fig:sigmoid}).
To project the total $Q$ quantile tokens on this core range, we need to scale and shift the quantile ID $i$ to $4\left(\frac{i}{Q}-\frac{1}{2}\right)$ (recall Equation~\ref{eq:bin-emb}).

The scale factor $S_d$ (recall Equation~\ref{eq:bin-scale}) controls how the core range is distributed over the total $Q$ quantile tokens. 
For a given scale factor value $s\in\mathbb{N}^+$, we assign $2s$ embedded dimensions with different offsets.
More dimensions are assigned to larger scale factors because of the smaller number of tokens having transformed input values falling in the core range of the $\mathrm{sigmoid}$ function.
Thus, the vector of $S_d$ for $d=0,1,\dots,D-1$ is constructed by $\verb|repeat(arange(n)+1, 2*(arange(n)+1))[:D]|$ of a large enough $\verb|n|$.

\begin{proposition}
    Equation~\ref{eq:bin-scale} is a closed form of the scale factor construction above ($\verb|repeat(arange(n)+1, 2*(arange(n)+1))[:D]|$).
\end{proposition}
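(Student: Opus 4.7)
The plan is to characterize the construction piecewise, express the value $S_d = s$ in the form of an interval of $d$'s, and then verify that the closed-form floor expression in Equation~\ref{eq:bin-scale} yields the same interval.

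First I would unpack the pseudocode. The construction repeats the integer $s$ exactly $2s$ times for $s = 1, 2, 3, \ldots$ and reads off the first $D$ entries. Hence the entry at index $d$ equals $s$ if and only if $d$ lies in the block corresponding to $s$, whose starting position is $\sum_{k=1}^{s-1} 2k = s(s-1)$ and whose ending position (inclusive) is $s(s-1) + 2s - 1 = s(s+1) - 1$. So the construction gives $S_d = s$ iff $s^2 - s \le d \le s^2 + s - 1$, equivalently $s^2 - s \le d < s^2 + s$.

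Next I would verify that the floor formula picks out the same interval. Writing $f(d) = \tfrac{1 + \sqrt{1+4d}}{2}$, I would argue that $\lfloor f(d)\rfloor = s$ iff $s \le f(d) < s+1$. The lower bound rearranges to $2s - 1 \le \sqrt{1+4d}$; since $2s - 1 \ge 1 > 0$ for $s \ge 1$, squaring is equivalence-preserving and yields $s^2 - s \le d$. The upper bound rearranges to $\sqrt{1+4d} < 2s+1$, which by squaring (both sides non-negative) gives $d < s^2 + s$. Combining, $\lfloor f(d)\rfloor = s$ iff $s^2 - s \le d < s^2 + s$, which exactly matches the interval from the construction.

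The only subtlety to watch is the boundary behavior of the square root at integer $d$ and the handling of $s = 0$ versus $s \ge 1$; at $d = 0$ the formula gives $f(0) = 1$, so $S_0 = 1$, consistent with the construction starting at $s = 1$. I do not foresee a genuine obstacle; the main care-point is simply keeping the two squaring steps as equivalences by checking sign conditions, and ensuring the half-open interval matches on both sides.
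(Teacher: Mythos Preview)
Your proposal is correct and follows essentially the same approach as the paper: both identify the block $[s^2-s,\,s^2+s-1]$ on which the construction equals $s$, then show the floor expression lands on $s$ exactly there. The only cosmetic difference is that the paper evaluates $\hat S_d=\frac{1+\sqrt{1+4d}}{2}$ at the two endpoints and invokes monotonicity to conclude $\hat S_d\in[s,s+1)$, whereas you solve the inequalities $s\le f(d)<s+1$ for $d$ directly; the content is the same.
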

\begin{proof}
    The construction means that for scale factor value $s\in\mathbb{N}^+$ starting from 1, the same value repeats $2s$ times. Let $S_d'$ be the value by the construction, and $\mathcal{J}(s)=\{d\mid S_d'=s\}$, then $\mathcal{J}(s)$ must be a set of consecutive integers, and

    \begin{align}
        \min\mathcal{J}(s)&=\sum_{i=1}^{s-1}2s=(1+s-1)(s-1)=s^2-s\label{eq:minj}\\
        \max\mathcal{J}(s)&=\min\mathcal{J}(s)+2s-1=s^2+s-1
    \end{align}

    Let $\hat{S}_d=\frac{1+\sqrt{1+4d}}{2}$, then $S_d=\lfloor\hat{S}_d\rfloor$.
    Then, by $s\ge1$,
\begin{align}
\hat{S}_{\min\mathcal{J}(s)}
&= \hat{S}_{s^2-s}
  \notag\\
&= \frac{1+\sqrt{1+4\,(s^2-s)}}{2}
  \notag\\
&= \frac{1+\sqrt{(2s-1)^2}}{2}
  \notag\\
&= \frac{1+|2s-1|}{2}
  \notag\\
&= s, 
  \tag{3a}\\
\hat{S}_{\max\mathcal{J}(s)}
&= \hat{S}_{s^2+s-1}
  \notag\\
&= \frac{1+\sqrt{1+4\,(s^2+s-1)}}{2}
  \notag\\
&< \frac{1+\sqrt{(2s+1)^2}}{2}
  \notag\\
&= \frac{1+|2s+1|}{2}
  \notag\\
&= s+1.
  \tag{3b}
\end{align}

    Note that $\hat{S}_d$ is monotonically increasing with respect to $d$, then we must have
    \begin{align}
        \hat{S}_d\in[s,s+1),\,\forall d\in\mathcal{J}(s)
    \end{align}

    This concludes to
    \begin{align}
        S_d=\lfloor\hat{S}_d\rfloor=s
    \end{align}

    Given $s\in\mathbb{N}^+$, then $S_d\in\mathbb{N}^+$.
\end{proof}

For embedding dimensions corresponding to the same scale factor $S_d$, we apply evenly distributed offsets in $[-2S_d,2S_d]$, where the constant factors $-2,2$ also come from the core range $x\in[-2,2]$. Thus, the vector of offsets $O_d$ for $d=0,1,\dots,D-1$ is constructed by $concat(linspace(-2*x,2*x,2*x)$ for $x$ in $(arange(n)+1))[:D]|$.

\begin{proposition}
    Equation~\ref{eq:bin-offset} is a closed form of the offset construction above ($concat(linspace(-2*x,2*x,2*x)$ for $x$ in $(arange(n)+1))[:D]|$).
\end{proposition}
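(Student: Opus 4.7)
The plan is to mirror the structure of the proof of the previous proposition (on $S_d$) by fixing a scale value $s \in \mathbb{N}^+$, restricting attention to the block of indices $d \in \mathcal{J}(s) := \{d : S_d = s\} = \{s^2-s, s^2-s+1, \ldots, s^2+s-1\}$ (as established there), and showing that the linspace construction produces exactly the closed form in Equation~\ref{eq:bin-offset} on each such block. Since the blocks are contiguous and exhaust $\{0, 1, \ldots, D-1\}$, this suffices.

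The key step is to evaluate the linspace explicitly. For each $d \in \mathcal{J}(s)$, I would set the local index $k = d - (s^2-s) \in \{0, 1, \ldots, 2s-1\}$ and write
$$O_d = -2s + k \cdot \frac{4s}{2s - 1}.$$
Then substituting $k = d - s^2 + s$ and combining over the common denominator $2s - 1$ gives numerator $-2s(2s-1) + 4s(d - s^2 + s) = -4s^3 + (4d+2)s$, hence
$$O_d = \frac{-4s^3 + (4d+2)s}{2s-1},$$
which matches Equation~\ref{eq:bin-offset} after replacing $s$ by $S_d$. The containment $O_d \in [-2S_d, 2S_d]$ then follows immediately from the fact that the linspace interpolates linearly between its endpoints $-2s$ and $2s$.

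The only mild subtlety to verify is the boundary case $s = 1$, where the denominator $2s - 1$ equals $1$ and the linspace has only $2s = 2$ points. A direct check gives $O_0 = -2$ and $O_1 = 2$, matching the two endpoints, so the formula remains valid without modification. No other obstacle arises: the argument is essentially a routine algebraic simplification bootstrapped off the block structure $\mathcal{J}(s)$ already proved, and the closed form is uniquely determined by the linear interpolation within each block.
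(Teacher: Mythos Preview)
Your proposal is correct and follows essentially the same approach as the paper: both arguments restrict to the block $\mathcal{J}(s)$, write out the linspace step $-2s + k\cdot\frac{4s}{2s-1}$ with local index $k=d-(s^2-s)$, and simplify to the closed form $\frac{-4s^3+(4d+2)s}{2s-1}$, then note the interval containment follows from the endpoints. Your explicit $s=1$ boundary check is a harmless extra that the paper omits, since the general algebra already covers it.
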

\begin{proof}
    The \verb|x| in the construction is $s$ in the previous proof.
    Let the $i$-th offset value with $S_d=s$ be $o_{si}$ by the construction. Then we have $i\in[0,2s)$ and 
    \begin{align}
  o_{si}
  &= -2s + i\cdot\frac{2s - (-2s)}{2s-1}
  \notag\\
  &= \frac{-4s^2 + 2s + 4si}{2s-1}
  \notag\\
  &= \frac{2s\,\bigl(-2s + (2i+1)\bigr)}{2s-1}
  \label{eq:osi}
\end{align}

    Let $O'_d$ be the $d$-th value in the constructed offset value, regardless of scale factor, then substitute with Equation~\ref{eq:osi}, \ref{eq:minj} and \ref{eq:bin-offset},

    \begin{align}
        O_d'=o_{S_d(d-\min\mathcal{J}(S_d))}&=\frac{2S_d(-2S_d+(2(d-S_d^2+S_d)+1))}{2S_d-1}\\
        &=\frac{-4S_d^3+(4d+2)S_d}{2S_d-1}=O_d
    \end{align}

    Furthermore, $o_{si}\in[-2s,2s]$, so $O_d\in[-2S_d,2S_d]$.
\end{proof}

Applying the sigmoid function on the quantile IDs with the scale factors and offsets, we obtain the quantile embedding value generators in Equation~\ref{eq:bin-emb}.

\subsection{Comparison between Quantile Embeddings and Positional Embeddings}

Table~\ref{tab:emb-comp} shows a summary of the comparison between the proposed quantile embeddings and the typical function-generated positional embeddings.
The two function-generated embeddings are also visualized in Fig.~\ref{fig:comp-emb}.

\begin{table}[t]
    \centering
    \caption{Comparison between the proposed function-generated quantile and typical function-generated positional embeddings for transformers~\citep{transformer}.}
    \label{tab:emb-comp}
    \resizebox{\linewidth}{!}{
    \begin{tabular}{p{0.23\linewidth}p{0.45\linewidth}p{0.4\linewidth}}
        \toprule
         & Quantile Embeddings & Positional Embeddings \\
        \midrule
        Core function & Sigmoid & Trigonometric \\
        Core relation & Absolute ordinal value & Relative distance \\
        Function property & Non-linear and monotonic, range-controled & Periodic, range-controled \\
        Dimensions differ in & Scale factor and offset & Frequency and function (sin vs. cos) \\
        \bottomrule
    \end{tabular}
    }
\end{table}

\begin{figure}[t]
    \centering
    \begin{minipage}{0.48\linewidth}
        \centering
        \includegraphics[width=0.95\linewidth]{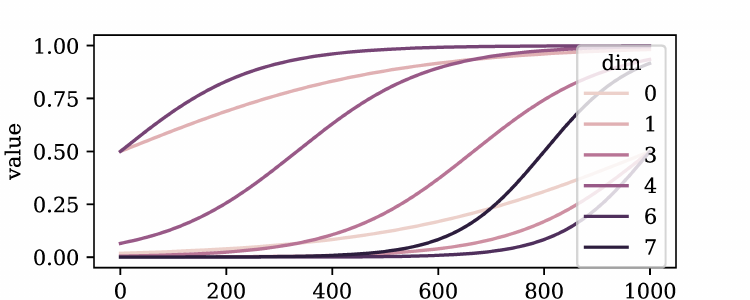}
    \end{minipage}%
    \hfill
    \begin{minipage}{0.48\linewidth}
        \centering
        \includegraphics[width=0.95\linewidth]{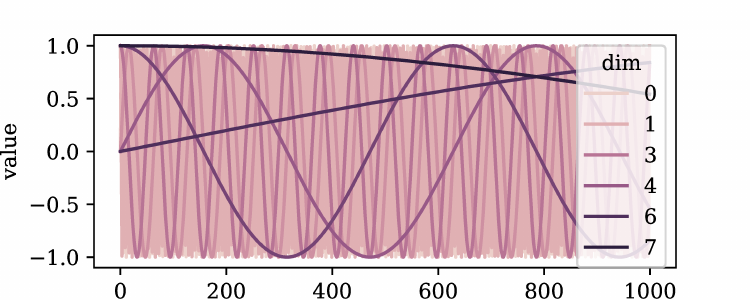}
    \end{minipage}
    
    \begin{minipage}{0.48\linewidth}
        \centering
        \includegraphics[width=0.95\linewidth]{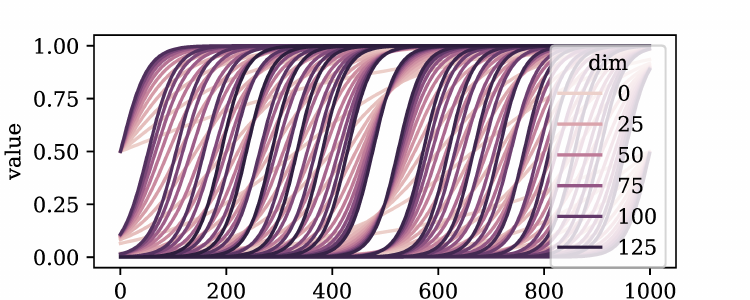}
    \end{minipage}%
    \hfill
    \begin{minipage}{0.48\linewidth}
        \centering
        \includegraphics[width=0.95\linewidth]{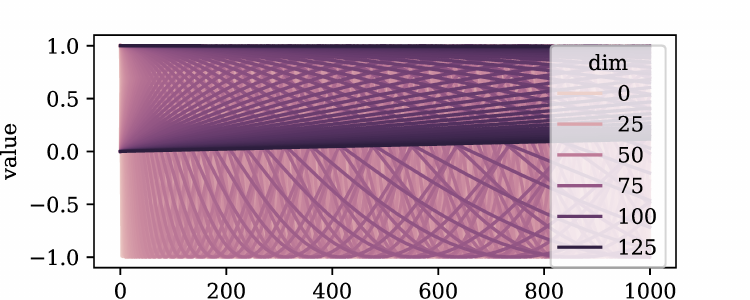}
    \end{minipage}

    \caption{Visualized comparison of function-generated quantile and positional embeddings. The $x$-axis represents $i\in\{0,1,\dots,Q-1\}$ (where $Q=1000$) and $y$-axis represents the corresponding embedded values. Different colors of lines represent different embedded dimensions. \textbf{Left:} quantile embeddings (Equation~\ref{eq:bin-emb}), \textbf{Right:} positional embeddings (Equation~\ref{eq:pos-emb}), \textbf{Top:} $D=8$, \textbf{Bottom:} $D=128$.
    }
    \label{fig:comp-emb}
\end{figure}

\subsection{Proof of Theorem~\ref{thm:emb}}
\label{app:emb:proof}

\begin{proof}
    For simplicity of symbols, let $q_{id}=QE_{id}$.
    By Equation~\ref{eq:bin-emb}, $q_{id}=\mathrm{sigmoid}(k_di+b_d)$ for some $k_d\in\mathbb{R}^+,b_d\in\mathbb{R}$.
    Note that $\mathrm{sigmoid}$ is strictly monotonically increasing, and the linear function $k_di+b_d$ with $k_d>0$ is also strictly monotonically increasing concerning $i$.
    Thus, $q_{id}$ is strictly monotonically increasing with respect to $i$.

    If $|i-j|<|i-k|$, then $|q_{id}-q_{jd}|<|q_{id}-q_{kd}|,\forall d\in\{0,1,\dots,D-1\}$, by the monotonicity of $q_{id}$ with respect to $i$ and triangle inequality.
    Then, by the strictly increasing monotonicity of $f(x)=x^p$ and $f(x)=x^{\frac{1}{p}}$ for $p\ge1$ on $x\in\mathbb{R}^+\cup\{0\}$ and by the monotonicity of the sum of monotonic functions, we must have
    \begin{align}
\|\mathbf{q}_i-\mathbf{q}_j\|_p
  &= \Bigl(\sum_{d=0}^{D-1}\lvert q_{id}-q_{jd}\rvert^p\Bigr)^{\frac{1}{p}}
  \notag\\
  &< \Bigl(\sum_{d=0}^{D-1}\lvert q_{id}-q_{kd}\rvert^p\Bigr)^{\frac{1}{p}}
  \notag\\
  &= \|\mathbf{q}_i-\mathbf{q}_k\|_p
  \label{eq:qp-distance-ineq}
\end{align}
    and by the monotonicity of the maximum of monotonic functions, we must have
    \begin{align}
\|\mathbf{q}_i-\mathbf{q}_j\|_\infty
  &= \max_{0\le d<D}\lvert q_{id}-q_{jd}\rvert
  \notag\\
  &< \max_{0\le d<D}\lvert q_{id}-q_{kd}\rvert
  \notag\\
  &= \|\mathbf{q}_i-\mathbf{q}_k\|_\infty
  \label{eq:qp-infty-ineq}
\end{align}
    Therefore, if $|i-j|<|i-k|$, then $\|\mathbf{q}_i-\mathbf{q}_j\|_p<\|\mathbf{q}_i-\mathbf{q}_k\|_p$.

    In the inverse direction, we prove the contrapositive. If $|i-j|\ge|i-k|$, then $|q_{id}-q_{jd}|\ge|q_{id}-q_{kd}|,\forall d\in\{0,1,\dots,D-1\}$. Next, by similar monotonic arguments above, we must have
    \begin{align}
    \|\mathbf{q}_i-\mathbf{q}_j\|_p
      &= \Bigl(\sum_{d=0}^{D-1}\lvert q_{id}-q_{jd}\rvert^p\Bigr)^{\tfrac{1}{p}}
      \notag\\
      &\ge \Bigl(\sum_{d=0}^{D-1}\lvert q_{id}-q_{kd}\rvert^p\Bigr)^{\tfrac{1}{p}}
      \notag\\
      &= \|\mathbf{q}_i-\mathbf{q}_k\|_p
      \label{eq:qp-pnorm-ge-ineq}
    \end{align}
    and
    \begin{align}
\|\mathbf{q}_i-\mathbf{q}_j\|_\infty
  &= \max_{0\le d<D}\lvert q_{id}-q_{jd}\rvert
  \notag\\
  &\ge \max_{0\le d<D}\lvert q_{id}-q_{kd}\rvert
  \notag\\
  &= \|\mathbf{q}_i-\mathbf{q}_k\|_\infty
  \label{eq:qp-infty-ge-ineq}
\end{align}

    The above shows that if $|i-j|\ge|i-k|$, then $\|\mathbf{q}_i-\mathbf{q}_j\|_p\ge\|\mathbf{q}_i-\mathbf{q}_k\|_p$.
    Therefore, if $\|\mathbf{q}_i-\mathbf{q}_j\|_p<\|\mathbf{q}_i-\mathbf{q}_k\|_p$, then $|i-j|<|i-k|$.

    Summarizing the above, we have $|i-j|<|i-k|$ if and only if $\|\mathbf{q}_i-\mathbf{q}_j\|_p<\|\mathbf{q}_i-\mathbf{q}_k\|_p$, and the conclusion holds for any $p$.
\end{proof}

\begin{remark}
    According to the proof, any strictly monotonically increasing core function, not restricted to sigmoid, makes Theorem~\ref{thm:emb} hold.
\end{remark}
\begin{remark}
    Any monotonically increasing core function, including non-strictly monotonic ones, makes the ``if'' direction of Theorem~\ref{thm:emb} hold, with no guarantee of the ``only if`` direction (the two $p$-norm distances can be equal).
\end{remark}

\section{Supplementary Loss Function Explanation and Analysis}

\subsection{Proof of Lemma~\ref{thm:opt}}
\label{app:proof:opt}
\begin{proof}
    Firstly, note that $\frac{\partial w_{ti}}{\partial z_j}=0,\forall i,j\in\{1,\dots,V\}$, by independence between the two. Also mind that $w_{ti}>0,\forall i\in\{1,\dots,V\}$.
    \begin{align}
    \frac{\partial\mathcal{L}_{\mathrm{oce}}(\mathbf{z},t)}{\partial z_t}
      &= -\frac{\sum_{i=1}^V w_{ti}e^{z_i}}{w_{tt}e^{z_t}}
      \notag\\
      &\cdot \frac{%
           w_{tt}e^{z_t}\sum_{i=1}^V w_{ti}e^{z_i}
           -w_{tt}e^{z_t}\cdot w_{tt}e^{z_t}
         }{%
           \bigl(\sum_{i=1}^V w_{ti}e^{z_i}\bigr)^2
         }
      \notag\\
      &= -\frac{\sum_{i=1}^V w_{ti}e^{z_i} - w_{tt}e^{z_t}}
               {\sum_{i=1}^V w_{ti}e^{z_i}}
      \\&< 0,
      \\[1ex]
    \frac{\partial\mathcal{L}_{\mathrm{oce}}(\mathbf{z},t)}{\partial z_i}
      &= -\frac{\sum_{j=1}^V w_{tj}e^{z_j}}{w_{tt}e^{z_t}}
         \cdot \frac{%
           -\,w_{ti}e^{z_i}\cdot w_{tt}e^{z_t}
         }{%
           \bigl(\sum_{j=1}^V w_{tj}e^{z_j}\bigr)^2
         }
      \notag\\
      &= \frac{w_{ti}e^{z_i}}
               {\sum_{j=1}^V w_{tj}e^{z_j}}> 0.
    \end{align}
    Thus, $\mathcal{L}_{\text{oce}}(\mathbf{z}, t)$ is optimized (minimized) when $z_t\to\infty,z_i\to-\infty (t\ne i)$.
\end{proof}

\subsection{Proof of Theorem~\ref{thm:adj}}
\label{app:prof:adj}
\begin{proof}
    Express $\mathcal{L}_{\text{oce}}$ and $\mathcal{L}_{\text{ce}}$ using $\mathbf{p}$, we have
    \begin{align}
  \mathcal{L}_{\text{ce}}(\mathbf{z},t)
  &= -\log p_t,\\[0.5ex]
  \mathcal{L}_{\text{oce}}(\mathbf{z},t)
  &= -\log\frac{w_{tt}e^{z_t}}{\sum_{i=1}^V w_{ti}e^{z_i}}
    \notag\\&= -\log\frac{w_{tt}\,\dfrac{e^{z_t}}{\sum_{i=1}^V e^{z_i}}}
               {\sum_{i=1}^V w_{ti}\,\dfrac{e^{z_i}}{\sum_{j=1}^V e^{z_j}}}
    \notag\\&= -\log\frac{w_{tt}p_t}{\sum_{i=1}^V w_{ti}p_i}
    \notag\\&= -\log\frac{w_{tt}p_t}{D(\mathbf{z})}.
\end{align}

    By $\mathcal{L}_{\text{ce}}(\mathbf{z},t)=\mathcal{L}_{\text{ce}}(\mathbf{z}^*,t)$, we have $p_t=p_t^*$. Given $D(\mathbf{z})<D(\mathbf{z}^*)$, we must have $\mathcal{L}_{\text{oce}}(\mathbf{z},t)<\mathcal{L}_{\text{oce}}(\mathbf{z}^*,t)$.
\end{proof}

\begin{figure}
    \centering
    \includegraphics[width=\linewidth,trim={15pt 18pt 5pt 17pt},clip]{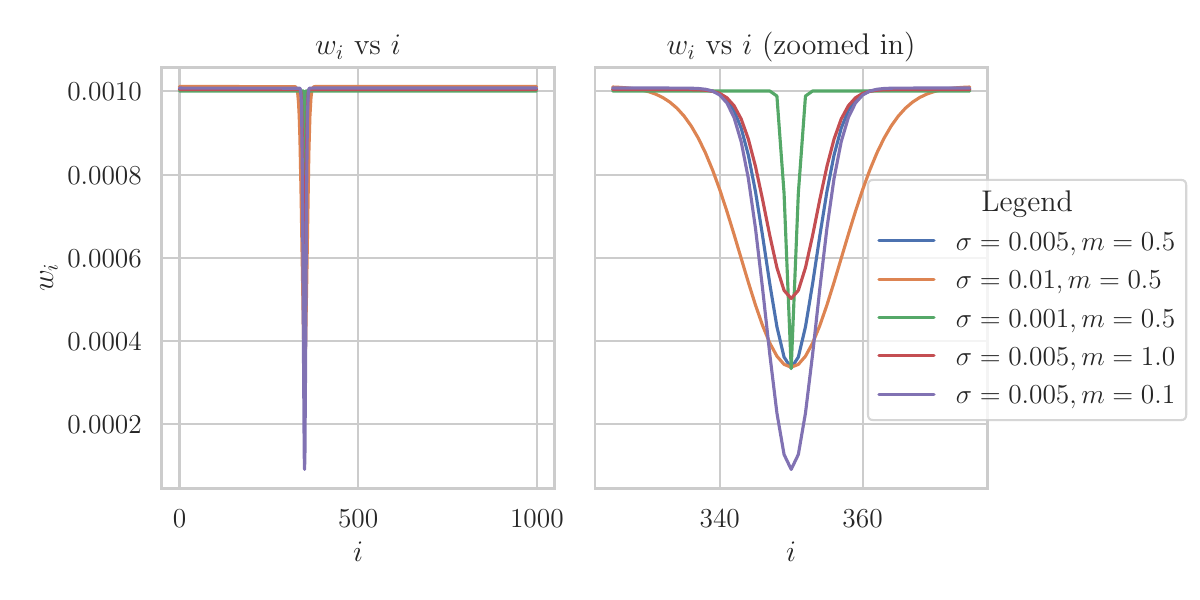}
    \caption{The relation of weights $w_{ti}$ (normalized so that $\|\mathbf{w}\|=1$ for visualization purpose to token index $i$, with vocabulary size $V=1000$, and target token $t=350$.}
    \label{fig:oce}
\end{figure}

\subsection{Detailed Explanations and Analysis of the Weight Function (Equation~\ref{eq:weight})}
\label{app:weight}

Recall the function of weight given the target token and current token in Equation~\ref{eq:weight}. This section will provide detailed explanations and analysis of it.

The exponential term comes from the intuition that the difference between closer quantiles is much more significant than the difference between farther quantiles. Some scaling and shifting is applied on the exponential term. 

Fig.~\ref{fig:oce} illustrates the relationship between $w_{ti}$ and $i$ under varying values of $\sigma$ and $m$. 
The parameter $\sigma$ primarily governs the width around the target token where the weights are significantly altered; larger $\sigma$ values result in a broader width. In contrast, $m$ dictates the intensity of the weight modification, with smaller $m$ values leading to more pronounced changes in weight.

\subsection{Overall Loss Computation}
\label{app:proof:overall}
Concerning the combination of valid quantile tokens and other non-quantile tokens, let valid quantile tokens have IDs in the range $[Q_s, Q_e)$.
The valid token group loss (for quantile tokens at this position) is defined by Equation~\ref{eq:vg} when $t \in [Q_s, Q_e)$.
\begin{equation}
\label{eq:vg}
    \mathcal{L}_{\text{vg}}(\mathbf{z},t)=-\log\frac{\sum_{i\in[Q_s,Q_e)}e^{z_i}}{\sum_{i=1}^Ve^{z_i}}
\end{equation}
In sum, Equation~\ref{eq:loss} expresses the loss function on one token.
\begin{align}
    \label{eq:loss}
    &\mathcal{L}(\mathbf{z},t)= \begin{cases}
        \mathcal{L}_{\text{ce}}(\mathbf{z},t)&\text{ if }t\notin[Q_s,Q_e)\\
        \mathcal{L}_{\text{oce}}(\mathbf{z}_{Q_s:Q_e},t-Q_s)+\mathcal{L}_{\text{vg}}(\mathbf{z},t)&\text{ if }t\in[Q_s,Q_e)\\
    \end{cases}
\end{align}

Although the ordinal property also holds for K-Means bin IDs, we do not calculate the loss on them this way because the number of different bins is small, and we expect the model to predict the bins accurately.


\section{Experiment Setup Details}
\label{app:setup}

\subsection{Experiment Environment}
\label{app:setup:env}

Experiments were conducted on a system running Ubuntu 22.04.1 LTS with kernel version 6.8.0. The machine featured a 13th Gen Intel Core i9-13900K CPU with 24 cores and 32 threads, capable of a maximum clock speed of 5.8 GHz. GPU computations were performed using a single NVIDIA GeForce RTX 4090 with 24 GB VRAM. The environment was set up with NVIDIA driver version 565.57.01 and CUDA 12.7.

\subsection{Details on Datasets}
\label{app:setup:datasets}

Table~\ref{tab:datasets} shows the details of the datasets we used. Datasets are downloaded by \verb|sklearn.datasets.fetch_openml|, with the dataset name as input.
\edit{Random splits of $8:2$ of training and test sets are applied.}

\begin{table*}[t]
    \centering
    \caption{Datasets used in experiments. \#R, \#F, \#N, and \#C represents the number of rows, features (including target column), numeric features, and categorical features respectively. The task can be classification, denoted ``clf'' followed by the number of classes in the bracket, and regression, denoted ``reg''. Aliases in bracket in ``Name'' column show the names used in this paper.}
    \label{tab:datasets}
    \setlength{\tabcolsep}{10pt}
    \resizebox{0.8\linewidth}{!}{
    \begin{tabular}{lrrrrl}
\toprule
Name & \#R & \#F & \#N & \#C & Task \\
\midrule
adult~\cite{adult-data} & 48842 & 15 & 2 & 13 & clf(2) \\
bank-marketing (bank)~\cite{bank-data} & 45211 & 17 & 7 & 10 & clf(2) \\
boston~\cite{boston-data} & 506 & 14 & 12 & 2 & reg \\
breast-w (breast)~\cite{breast-data} & 699 & 10 & 9 & 1 & clf(2) \\
credit-g (credit)~\cite{credit-data} & 1000 & 21 & 7 & 14 & clf(2) \\
diabetes~\cite{diabetes-data} & 768 & 9 & 8 & 1 & clf(2) \\
iris~\cite{iris-data} & 150 & 5 & 4 & 1 & clf(3) \\
qsar-biodeg (qsar)~\cite{qsar-data} & 1055 & 42 & 41 & 1 & clf(2) \\
wdbc~\cite{wdbc-data} & 569 & 31 & 30 & 1 & clf(2) \\
\bottomrule
\end{tabular}

    }
\end{table*}

\begin{table*}[t]
    \caption{Performance comparison between different models in terms of MLE. Classification tasks are evaluated by weighted AUC ROC scores, and regression tasks are evaluated by $R^2$ scores. The last row shows the average relative error versus real. The best scores and the second best scores are highlighted in bold with and without underscore, respectively. Equal values in the first 3 digits may be compared by the 4-th or 5-th digits.}
    \label{tab:ml-utility}
    \setlength{\tabcolsep}{1pt}
    \resizebox{\linewidth}{!}{

\begin{tabular}{ll>{\columncolor{gray!20}}ccccccccc>{\columncolor{red!10}}c>{\columncolor{red!10}}c>{\columncolor{red!10}}c}
\toprule
Dataset & ML & Real & CTAB+ & TTVAE & TabSyn & FD & GReaT & RTF & TabMT & TabuLa & TTF-S & TTF-L & TTF-NP \\
\midrule
\multirow{3}{*}{adult} & LN & 0.914 & $0.904_{\pm 0.001}$ & $0.875_{\pm 0.010}$ & $0.910_{\pm 0.002}$ & $\boldsymbol{0.911}_{\pm 0.000}$ & $0.911_{\pm 0.000}$ & $0.911_{\pm 0.000}$ & $0.911_{\pm 0.000}$ & $\underline{\boldsymbol{0.913}}_{\pm 0.000}$ & $0.906_{\pm 0.001}$ & $0.911_{\pm 0.001}$ & $0.910_{\pm 0.000}$ \\
 & RF & 0.910 & $0.902_{\pm 0.002}$ & $0.869_{\pm 0.011}$ & $0.905_{\pm 0.003}$ & $\boldsymbol{0.908}_{\pm 0.000}$ & $0.905_{\pm 0.000}$ & $\underline{\boldsymbol{0.908}}_{\pm 0.000}$ & $0.907_{\pm 0.002}$ & $0.888_{\pm 0.002}$ & $0.901_{\pm 0.001}$ & $0.905_{\pm 0.000}$ & $0.903_{\pm 0.002}$ \\
 & XGB & 0.914 & $0.905_{\pm 0.001}$ & $0.870_{\pm 0.012}$ & $0.910_{\pm 0.003}$ & $\boldsymbol{0.912}_{\pm 0.001}$ & $0.911_{\pm 0.000}$ & $\underline{\boldsymbol{0.912}}_{\pm 0.001}$ & $0.911_{\pm 0.001}$ & $0.899_{\pm 0.005}$ & $0.906_{\pm 0.001}$ & $0.911_{\pm 0.001}$ & $0.909_{\pm 0.001}$ \\
\midrule
\multirow{3}{*}{bank} & LN & 0.907 & $0.901_{\pm 0.003}$ & $0.885_{\pm 0.004}$ & $0.900_{\pm 0.005}$ & $0.903_{\pm 0.001}$ & $\underline{\boldsymbol{0.907}}_{\pm 0.000}$ & $0.904_{\pm 0.000}$ & $0.862_{\pm 0.005}$ & $0.906_{\pm 0.001}$ & $0.906_{\pm 0.000}$ & $\boldsymbol{0.906}_{\pm 0.000}$ & $0.901_{\pm 0.003}$ \\
 & RF & 0.930 & $0.902_{\pm 0.002}$ & $0.884_{\pm 0.009}$ & $0.907_{\pm 0.006}$ & $0.917_{\pm 0.001}$ & $0.907_{\pm 0.001}$ & $0.913_{\pm 0.001}$ & $0.886_{\pm 0.011}$ & $\underline{\boldsymbol{0.923}}_{\pm 0.004}$ & $0.913_{\pm 0.002}$ & $0.918_{\pm 0.001}$ & $\boldsymbol{0.921}_{\pm 0.003}$ \\
 & XGB & 0.936 & $0.906_{\pm 0.000}$ & $0.877_{\pm 0.011}$ & $0.913_{\pm 0.007}$ & $0.920_{\pm 0.001}$ & $0.910_{\pm 0.001}$ & $0.918_{\pm 0.002}$ & $0.892_{\pm 0.007}$ & $0.925_{\pm 0.003}$ & $0.919_{\pm 0.002}$ & $\boldsymbol{0.925}_{\pm 0.003}$ & $\underline{\boldsymbol{0.926}}_{\pm 0.002}$ \\
\midrule
\multirow{3}{*}{boston} & LN & 0.590 & $0.264_{\pm 0.034}$ & $0.000_{\pm 0.079}$ & $0.533_{\pm 0.036}$ & $0.542_{\pm 0.016}$ & $0.397_{\pm 0.000}$ & $0.529_{\pm 0.028}$ & $0.409_{\pm 0.015}$ & $0.484_{\pm 0.069}$ & $0.525_{\pm 0.035}$ & $\boldsymbol{0.575}_{\pm 0.011}$ & $\underline{\boldsymbol{0.591}}_{\pm 0.000}$ \\
 & RF & 0.662 & $0.129_{\pm 0.015}$ & $0.124_{\pm 0.068}$ & $0.658_{\pm 0.025}$ & $0.608_{\pm 0.008}$ & $0.281_{\pm 0.001}$ & $0.584_{\pm 0.006}$ & $0.439_{\pm 0.096}$ & $0.589_{\pm 0.050}$ & $0.597_{\pm 0.038}$ & $\underline{\boldsymbol{0.694}}_{\pm 0.041}$ & $\boldsymbol{0.683}_{\pm 0.016}$ \\
 & XGB & 0.701 & $0.097_{\pm 0.078}$ & $0.211_{\pm 0.121}$ & $0.647_{\pm 0.017}$ & $0.604_{\pm 0.024}$ & $0.277_{\pm 0.014}$ & $0.578_{\pm 0.056}$ & $0.446_{\pm 0.115}$ & $0.615_{\pm 0.069}$ & $0.594_{\pm 0.061}$ & $\boldsymbol{0.665}_{\pm 0.026}$ & $\underline{\boldsymbol{0.691}}_{\pm 0.023}$ \\
\midrule
\multirow{3}{*}{breast} & LN & 0.985 & $0.913_{\pm 0.091}$ & $0.963_{\pm 0.006}$ & $\boldsymbol{0.987}_{\pm 0.001}$ & $0.986_{\pm 0.002}$ & $0.985_{\pm 0.000}$ & $\underline{\boldsymbol{0.987}}_{\pm 0.002}$ & $0.983_{\pm 0.004}$ & $0.985_{\pm 0.001}$ & $0.981_{\pm 0.005}$ & $0.987_{\pm 0.002}$ & $0.984_{\pm 0.004}$ \\
 & RF & 0.985 & $0.968_{\pm 0.012}$ & $0.979_{\pm 0.004}$ & $0.981_{\pm 0.002}$ & $0.975_{\pm 0.005}$ & $0.979_{\pm 0.004}$ & $0.980_{\pm 0.000}$ & $0.978_{\pm 0.002}$ & $0.976_{\pm 0.005}$ & $0.978_{\pm 0.001}$ & $\boldsymbol{0.983}_{\pm 0.002}$ & $\underline{\boldsymbol{0.984}}_{\pm 0.002}$ \\
 & XGB & 0.984 & $0.958_{\pm 0.025}$ & $0.970_{\pm 0.009}$ & $\underline{\boldsymbol{0.987}}_{\pm 0.001}$ & $\boldsymbol{0.984}_{\pm 0.002}$ & $0.982_{\pm 0.002}$ & $0.982_{\pm 0.002}$ & $0.982_{\pm 0.002}$ & $0.979_{\pm 0.002}$ & $0.982_{\pm 0.002}$ & $0.983_{\pm 0.003}$ & $0.982_{\pm 0.003}$ \\
\midrule
\multirow{3}{*}{credit} & LN & 0.836 & $0.538_{\pm 0.093}$ & $0.500_{\pm 0.000}$ & $0.780_{\pm 0.017}$ & $\boldsymbol{0.827}_{\pm 0.010}$ & $0.665_{\pm 0.000}$ & $0.801_{\pm 0.006}$ & $0.773_{\pm 0.038}$ & $0.814_{\pm 0.016}$ & $0.816_{\pm 0.020}$ & $0.755_{\pm 0.011}$ & $\underline{\boldsymbol{0.834}}_{\pm 0.001}$ \\
 & RF & 0.837 & $0.505_{\pm 0.097}$ & $0.500_{\pm 0.000}$ & $0.791_{\pm 0.007}$ & $\boldsymbol{0.819}_{\pm 0.009}$ & $0.722_{\pm 0.014}$ & $\underline{\boldsymbol{0.820}}_{\pm 0.010}$ & $0.763_{\pm 0.019}$ & $0.805_{\pm 0.019}$ & $0.781_{\pm 0.026}$ & $0.741_{\pm 0.027}$ & $0.813_{\pm 0.026}$ \\
 & XGB & 0.844 & $0.509_{\pm 0.088}$ & $0.500_{\pm 0.000}$ & $0.795_{\pm 0.014}$ & $\underline{\boldsymbol{0.836}}_{\pm 0.010}$ & $0.739_{\pm 0.006}$ & $0.797_{\pm 0.039}$ & $0.761_{\pm 0.030}$ & $0.817_{\pm 0.020}$ & $0.796_{\pm 0.011}$ & $0.764_{\pm 0.014}$ & $\boldsymbol{0.826}_{\pm 0.026}$ \\
\midrule
\multirow{3}{*}{diabetes} & LN & 0.884 & $0.817_{\pm 0.028}$ & $0.880_{\pm 0.003}$ & $0.883_{\pm 0.006}$ & $0.885_{\pm 0.001}$ & $0.858_{\pm 0.000}$ & $0.878_{\pm 0.003}$ & $0.855_{\pm 0.014}$ & $0.876_{\pm 0.007}$ & $\underline{\boldsymbol{0.890}}_{\pm 0.010}$ & $\boldsymbol{0.886}_{\pm 0.001}$ & $0.843_{\pm 0.058}$ \\
 & RF & 0.869 & $0.802_{\pm 0.014}$ & $0.849_{\pm 0.004}$ & $0.844_{\pm 0.012}$ & $0.856_{\pm 0.012}$ & $0.824_{\pm 0.012}$ & $0.841_{\pm 0.010}$ & $0.845_{\pm 0.009}$ & $\underline{\boldsymbol{0.863}}_{\pm 0.003}$ & $0.855_{\pm 0.014}$ & $\boldsymbol{0.859}_{\pm 0.010}$ & $0.814_{\pm 0.073}$ \\
 & XGB & 0.867 & $0.811_{\pm 0.005}$ & $0.851_{\pm 0.001}$ & $0.847_{\pm 0.009}$ & $0.846_{\pm 0.012}$ & $0.831_{\pm 0.003}$ & $0.849_{\pm 0.010}$ & $0.840_{\pm 0.009}$ & $\underline{\boldsymbol{0.857}}_{\pm 0.009}$ & $\boldsymbol{0.852}_{\pm 0.015}$ & $0.848_{\pm 0.017}$ & $0.811_{\pm 0.085}$ \\
\midrule
\multirow{3}{*}{iris} & LN & 1.000 & $0.272_{\pm 0.068}$ & $0.996_{\pm 0.006}$ & $0.999_{\pm 0.002}$ & $0.997_{\pm 0.005}$ & $0.985_{\pm 0.000}$ & $0.983_{\pm 0.015}$ & $\underline{\boldsymbol{1.000}}_{\pm 0.000}$ &  & $0.971_{\pm 0.018}$ & $0.997_{\pm 0.005}$ & $\underline{\boldsymbol{1.000}}_{\pm 0.000}$ \\
 & RF & 1.000 & $0.396_{\pm 0.099}$ & $0.992_{\pm 0.012}$ & $\underline{\boldsymbol{1.000}}_{\pm 0.000}$ & $\underline{\boldsymbol{1.000}}_{\pm 0.000}$ & $\underline{\boldsymbol{1.000}}_{\pm 0.000}$ & $0.993_{\pm 0.005}$ & $0.993_{\pm 0.005}$ & - & $0.999_{\pm 0.002}$ & $0.988_{\pm 0.017}$ & $0.967_{\pm 0.047}$ \\
 & XGB & 1.000 & $0.211_{\pm 0.042}$ & $0.999_{\pm 0.002}$ & $\underline{\boldsymbol{1.000}}_{\pm 0.000}$ & $\underline{\boldsymbol{1.000}}_{\pm 0.000}$ & $\underline{\boldsymbol{1.000}}_{\pm 0.000}$ & $0.998_{\pm 0.003}$ & $\underline{\boldsymbol{1.000}}_{\pm 0.000}$ &  & $\underline{\boldsymbol{1.000}}_{\pm 0.000}$ & $\underline{\boldsymbol{1.000}}_{\pm 0.000}$ & $0.999_{\pm 0.001}$ \\
\midrule
\multirow{3}{*}{qsar} & LN & 0.906 & $0.716_{\pm 0.085}$ & $0.669_{\pm 0.120}$ & $0.867_{\pm 0.012}$ & $0.881_{\pm 0.005}$ & $0.673_{\pm 0.000}$ & $0.880_{\pm 0.003}$ & $\boldsymbol{0.887}_{\pm 0.005}$ & $0.868_{\pm 0.034}$ & $0.874_{\pm 0.013}$ & $0.876_{\pm 0.006}$ & $\underline{\boldsymbol{0.907}}_{\pm 0.000}$ \\
 & RF & 0.936 & $0.648_{\pm 0.036}$ & $0.666_{\pm 0.117}$ & $0.882_{\pm 0.006}$ & $0.897_{\pm 0.011}$ & $0.616_{\pm 0.009}$ & $\boldsymbol{0.907}_{\pm 0.007}$ & $0.880_{\pm 0.008}$ & $0.825_{\pm 0.028}$ & $0.873_{\pm 0.006}$ & $0.884_{\pm 0.004}$ & $\underline{\boldsymbol{0.925}}_{\pm 0.006}$ \\
 & XGB & 0.921 & $0.731_{\pm 0.020}$ & $0.658_{\pm 0.114}$ & $0.860_{\pm 0.016}$ & $0.885_{\pm 0.010}$ & $0.602_{\pm 0.020}$ & $\boldsymbol{0.897}_{\pm 0.010}$ & $0.873_{\pm 0.009}$ & $0.842_{\pm 0.023}$ & $0.862_{\pm 0.005}$ & $0.884_{\pm 0.004}$ & $\underline{\boldsymbol{0.917}}_{\pm 0.005}$ \\
\midrule
\multirow{3}{*}{wdbc} & LN & 0.993 & $0.929_{\pm 0.053}$ & $0.976_{\pm 0.019}$ & $0.992_{\pm 0.001}$ & $\boldsymbol{0.993}_{\pm 0.002}$ & $0.979_{\pm 0.000}$ & $0.988_{\pm 0.003}$ & $0.984_{\pm 0.006}$ &  & $0.980_{\pm 0.007}$ & $0.979_{\pm 0.007}$ & $\underline{\boldsymbol{0.993}}_{\pm 0.000}$ \\
 & RF & 0.976 & $0.919_{\pm 0.024}$ & $0.952_{\pm 0.041}$ & $\underline{\boldsymbol{0.984}}_{\pm 0.004}$ & $\boldsymbol{0.982}_{\pm 0.004}$ & $0.976_{\pm 0.003}$ & $0.980_{\pm 0.012}$ & $0.978_{\pm 0.003}$ & - & $0.981_{\pm 0.001}$ & $0.975_{\pm 0.002}$ & $0.981_{\pm 0.001}$ \\
 & XGB & 0.990 & $0.930_{\pm 0.021}$ & $0.949_{\pm 0.035}$ & $0.986_{\pm 0.002}$ & $\boldsymbol{0.989}_{\pm 0.001}$ & $0.973_{\pm 0.003}$ & $0.988_{\pm 0.002}$ & $0.982_{\pm 0.004}$ &  & $0.987_{\pm 0.003}$ & $0.985_{\pm 0.006}$ & $\underline{\boldsymbol{0.989}}_{\pm 0.001}$ \\
\bottomrule
\end{tabular}

    }
\end{table*}

\subsection{Baseline Choices and Implementation}
\label{app:setup:baseline}

For i) non-neural-network method (tree-based sampling), ii) GAN~\citep{gan}, iii) VAE~\citep{vae}, and iv) Diffusion~\citep{diffusion}, we choose one recent, representative, and well-performing (especially on MLE) model each: Forest Diffusion~\citep{fdiff}, CTAB-GAN+~\citep{ctabganp}, TTVAE~\citep{ttvae}, and TabSyn~\citep{tabsyn}.
For auto-regressive and masked transformers, which are more aligned with our work, we include more works: GReaT~\citep{great}, REaLTabFormer~\citep{realtabformer}, TabMT~\citep{tabmt}, TabuLa~\citep{tabula}.
All of them will use a DistilGPT2~\citep{distilbert} as the transformer backbone except for TabMT~\citep{tabmt}, which has specific settings described in its paper.

Other large language model (LLM)-based methods, such as HARMONIC~\citep{harmonic}, are excluded from our comparison for fairer comparison because 
\begin{itemize}
    \item They usually require a good-quality dataset description, but we want to focus on the case without additional information besides the data per se.
    \item Their performance is dependent on a very large and good LLM. The improvement of the LLM model may be more important than the improvement of the tabular-specific design, but we want to showcase the effectiveness of tabular-specific designs instead of an LLM.
\end{itemize}

We use the Synthcity~\citep{synthcity} implementation for 
GReaT~\citep{great}.
Our benchmark code is adapted from Synthcity~\citep{synthcity} too.
We also include CTAB-GAN+~\citep{ctabganp}, REaLTabFormer~\citep{realtabformer}, TabSyn~\citep{tabsyn}, Forest Diffusion~\citep{fdiff}, TabuLa~\citep{tabula}, and TTVAE~\citep{ttvae} using their official code on GitHub or public SDK in the benchmark.
TabMT~\citep{tabmt} has no publicly available code, so we do experiments based on a reproduction of the model based on the paper's description, so the model could be slightly different from that paper.

\subsection{Machine Learning Efficacy (MLE) Metrics Implementation}
\label{app:setup:ml-metric}

Instead of running all models with fixed, typically default, parameters across all datasets, we perform hyperparameter tuning before reporting their performance. The tuning process follows the same methodology used to optimize the core tree-based model in \frameworkname, but with 30 trials per round of optimization.

For linear models, no hyperparameter tuning is performed, as they follow the classical definition without additional tunable parameters.

The hyperparameter space explored for random forest is
\begin{itemize}
    \item \textbf{Number of Estimators}: Integers in $[100,300]$ step 50;
    \item \textbf{Maximum Depth}: Integers in $[5, 20]$ step 5;
    \item \textbf{Minimum Samples Split}: Integers in $[2,10]$ step 2;
    \item \textbf{Minimum Size per Leaf}: Integers $[1,5]$;
    \item \textbf{Maximum Features}: Value in ``sqrt'', ``log2'', and NULL;
    \item \textbf{Bootstramp}: Enabled or disabled.
\end{itemize}

The hyperparameter space explored for XGBoost is:
\begin{itemize}
    \item \textbf{Learning Rate}: Logarithmic scale float in $[0.01,0.3]$;
    \item \textbf{Number of Estimators}: Integers in $[100,300]$ step 50;
    \item \textbf{Maximum Depth}: Integers in $[3,10]$;
    \item \textbf{Minimum Child Weight}: Logarithmic scale float in $[1.0,10.0]$;
    \item \textbf{Minimum Split Loss Gamma}: Float in $[0.0,0.5]$;
    \item \textbf{Subsample Ratio}: Float in $[0.5,1.0]$;
    \item \textbf{Subsample Ratio of Columns by Tree}: Float in $[0.5,1.0]$;
    \item \textbf{L1 Regularization}: Float in $[0.0,10.0]$.
\end{itemize}

For all models, numeric values are standardized using standard scaling. Categorical values are one-hot encoded for linear models and label-encoded for random forest and XGBoost. To simplify the experiments, rows with missing values are excluded, as handling missing data is not the focus of this paper.

The reported performance for machine learning utility is measured using the weighted AUC-ROC for classification tasks and the $R^2$ score for regression tasks, so for all these scores, a larger value indicates a better performance. Each generator is trained and used to generate synthetic data of the same size as the training dataset three times for each dataset, and the summarized results of these runs are reported.

\begin{table*}[t]
    \centering
    \caption{Performance comparison between different models in terms of fidelity. The closer the values are to 1, the better the fidelity is. The best scores and the second best scores are highlighted in bold with and without underscore, respectively. Equal values in the first 3 digits may be compared by the 4-th or 5-th digits.}
    \label{tab:fidelity}
    \setlength{\tabcolsep}{1pt}
    \resizebox{\linewidth}{!}{
    \begin{tabular}{llcccccccc>{\columncolor{red!10}}c>{\columncolor{red!10}}c>{\columncolor{red!10}}c>{\columncolor{red!10}}c>{\columncolor{red!10}}c>{\columncolor{red!10}}c}
\toprule
Dataset & Metric & CTAB+ & TTVAE & TabSyn & FD & GReaT & RTF & TabMT & TabuLa & TTF-S & TTF-L & TTF-NP \\
\midrule
\multirow{2}{*}{adult} & Shape & $0.964_{\pm 0.005}$ & $0.852_{\pm 0.024}$ & $\boldsymbol{0.977}_{\pm 0.011}$ & $\underline{\boldsymbol{0.986}}_{\pm 0.000}$ & $0.929_{\pm 0.000}$ & $0.962_{\pm 0.001}$ & $0.974_{\pm 0.003}$ & $0.975_{\pm 0.000}$ & $0.951_{\pm 0.002}$ & $0.957_{\pm 0.003}$ & $0.921_{\pm 0.003}$ \\
 & Trend & $0.915_{\pm 0.004}$ & $0.733_{\pm 0.032}$ & $0.955_{\pm 0.018}$ & $\underline{\boldsymbol{0.972}}_{\pm 0.000}$ & $0.882_{\pm 0.000}$ & $0.933_{\pm 0.003}$ & $\boldsymbol{0.957}_{\pm 0.004}$ & $0.956_{\pm 0.001}$ & $0.887_{\pm 0.003}$ & $0.895_{\pm 0.005}$ & $0.838_{\pm 0.027}$ \\
\midrule
\multirow{2}{*}{bank} & Shape & $0.965_{\pm 0.002}$ & $0.875_{\pm 0.009}$ & $\boldsymbol{0.983}_{\pm 0.005}$ & $0.970_{\pm 0.001}$ & $0.915_{\pm 0.000}$ & $0.967_{\pm 0.001}$ & $0.969_{\pm 0.009}$ & $\underline{\boldsymbol{0.983}}_{\pm 0.000}$ & $0.927_{\pm 0.003}$ & $0.935_{\pm 0.003}$ & $0.932_{\pm 0.001}$ \\
 & Trend & $0.884_{\pm 0.003}$ & $0.788_{\pm 0.017}$ & $\boldsymbol{0.967}_{\pm 0.008}$ & $0.965_{\pm 0.019}$ & $0.903_{\pm 0.000}$ & $0.955_{\pm 0.001}$ & $0.944_{\pm 0.034}$ & $\underline{\boldsymbol{0.971}}_{\pm 0.002}$ & $0.944_{\pm 0.006}$ & $0.956_{\pm 0.003}$ & $0.935_{\pm 0.027}$ \\
\midrule
\multirow{2}{*}{boston} & Shape & $0.850_{\pm 0.003}$ & $0.663_{\pm 0.023}$ & $0.893_{\pm 0.003}$ & $0.895_{\pm 0.002}$ & $\boldsymbol{0.916}_{\pm 0.000}$ & $\underline{\boldsymbol{0.942}}_{\pm 0.004}$ & $0.835_{\pm 0.009}$ & $0.861_{\pm 0.030}$ & $0.843_{\pm 0.006}$ & $0.855_{\pm 0.003}$ & $0.863_{\pm 0.000}$ \\
 & Trend & $0.828_{\pm 0.011}$ & $0.747_{\pm 0.021}$ & $0.934_{\pm 0.003}$ & $0.942_{\pm 0.001}$ & $0.908_{\pm 0.000}$ & $0.947_{\pm 0.002}$ & $0.935_{\pm 0.014}$ & $0.886_{\pm 0.004}$ & $0.951_{\pm 0.020}$ & $\boldsymbol{0.973}_{\pm 0.006}$ & $\underline{\boldsymbol{0.992}}_{\pm 0.000}$ \\
\midrule
\multirow{2}{*}{breast} & Shape & $0.775_{\pm 0.024}$ & $0.498_{\pm 0.052}$ & $0.813_{\pm 0.005}$ & $0.835_{\pm 0.006}$ & $0.728_{\pm 0.000}$ & $0.753_{\pm 0.014}$ & $0.841_{\pm 0.048}$ & $0.814_{\pm 0.003}$ & $\underline{\boldsymbol{0.866}}_{\pm 0.022}$ & $\boldsymbol{0.848}_{\pm 0.010}$ & $0.758_{\pm 0.116}$ \\
 & Trend & $0.616_{\pm 0.063}$ & $0.333_{\pm 0.065}$ & $0.709_{\pm 0.006}$ & $\boldsymbol{0.766}_{\pm 0.004}$ & $0.636_{\pm 0.000}$ & $0.668_{\pm 0.010}$ & $0.747_{\pm 0.035}$ & $\underline{\boldsymbol{0.767}}_{\pm 0.003}$ & $0.735_{\pm 0.025}$ & $0.728_{\pm 0.020}$ & $0.680_{\pm 0.165}$ \\
\midrule
\multirow[t]{2}{*}{credit} & Shape & $0.937_{\pm 0.013}$ & $0.555_{\pm 0.001}$ & $0.932_{\pm 0.007}$ & $0.960_{\pm 0.001}$ & $0.932_{\pm 0.000}$ & $0.944_{\pm 0.003}$ & $0.955_{\pm 0.004}$ & $\boldsymbol{0.970}_{\pm 0.002}$ & $0.941_{\pm 0.008}$ & $0.946_{\pm 0.010}$ & $\underline{\boldsymbol{0.992}}_{\pm 0.008}$ \\
 & Trend & $0.862_{\pm 0.030}$ & $0.306_{\pm 0.005}$ & $0.862_{\pm 0.008}$ & $0.912_{\pm 0.009}$ & $0.861_{\pm 0.000}$ & $0.896_{\pm 0.006}$ & $0.911_{\pm 0.004}$ & $\boldsymbol{0.917}_{\pm 0.001}$ & $0.880_{\pm 0.012}$ & $0.883_{\pm 0.018}$ & $\underline{\boldsymbol{0.958}}_{\pm 0.019}$ \\
\midrule
\multirow[t]{2}{*}{diabetes} & Shape & $0.923_{\pm 0.006}$ & $0.887_{\pm 0.017}$ & $\boldsymbol{0.955}_{\pm 0.007}$ & $0.943_{\pm 0.003}$ & $0.881_{\pm 0.000}$ & $0.943_{\pm 0.004}$ & $0.946_{\pm 0.004}$ & $\underline{\boldsymbol{0.975}}_{\pm 0.002}$ & $0.928_{\pm 0.002}$ & $0.919_{\pm 0.014}$ & $0.806_{\pm 0.212}$ \\
 & Trend & $0.898_{\pm 0.004}$ & $0.888_{\pm 0.012}$ & $\boldsymbol{0.964}_{\pm 0.002}$ & $0.957_{\pm 0.006}$ & $0.921_{\pm 0.000}$ & $0.952_{\pm 0.005}$ & $0.952_{\pm 0.003}$ & $\underline{\boldsymbol{0.971}}_{\pm 0.012}$ & $0.954_{\pm 0.004}$ & $0.957_{\pm 0.000}$ & $0.924_{\pm 0.091}$ \\
\midrule
\multirow{2}{*}{iris} & Shape & $0.822_{\pm 0.022}$ & $0.885_{\pm 0.008}$ & $0.893_{\pm 0.003}$ & $\boldsymbol{0.901}_{\pm 0.004}$ & $0.870_{\pm 0.000}$ & $\underline{\boldsymbol{0.906}}_{\pm 0.017}$ & $0.895_{\pm 0.007}$ & \multirow{2}{*}{-} & $0.886_{\pm 0.011}$ & $0.883_{\pm 0.019}$ & $0.839_{\pm 0.029}$ \\
 & Trend & $0.590_{\pm 0.019}$ & $0.878_{\pm 0.004}$ & $0.917_{\pm 0.007}$ & $\boldsymbol{0.920}_{\pm 0.006}$ & $0.899_{\pm 0.000}$ & $0.896_{\pm 0.016}$ & $0.901_{\pm 0.011}$ &  & $0.902_{\pm 0.005}$ & $\underline{\boldsymbol{0.922}}_{\pm 0.018}$ & $0.895_{\pm 0.055}$ \\
\midrule
\multirow{2}{*}{qsar} & Shape & $0.917_{\pm 0.003}$ & $0.652_{\pm 0.031}$ & $0.930_{\pm 0.005}$ & $0.947_{\pm 0.004}$ & $0.887_{\pm 0.000}$ & $\underline{\boldsymbol{0.959}}_{\pm 0.003}$ & $0.920_{\pm 0.006}$ & $0.833_{\pm 0.034}$ & $0.914_{\pm 0.004}$ & $0.936_{\pm 0.003}$ & $\boldsymbol{0.954}_{\pm 0.001}$ \\
 & Trend & $0.863_{\pm 0.007}$ & $0.610_{\pm 0.059}$ & $0.914_{\pm 0.009}$ & $0.898_{\pm 0.011}$ & $0.855_{\pm 0.000}$ & $0.890_{\pm 0.007}$ & $0.923_{\pm 0.004}$ & $0.792_{\pm 0.008}$ & $0.896_{\pm 0.010}$ & $\boldsymbol{0.935}_{\pm 0.008}$ & $\underline{\boldsymbol{0.956}}_{\pm 0.001}$ \\
\midrule
\multirow{2}{*}{wdbc} & Shape & $0.861_{\pm 0.005}$ & $0.754_{\pm 0.047}$ & $0.946_{\pm 0.004}$ & $0.946_{\pm 0.004}$ & $0.873_{\pm 0.000}$ & $0.945_{\pm 0.006}$ & $0.937_{\pm 0.008}$ & \multirow{2}{*}{-} & $0.938_{\pm 0.007}$ & $\boldsymbol{0.957}_{\pm 0.003}$ & $\underline{\boldsymbol{0.980}}_{\pm 0.004}$ \\
 & Trend & $0.866_{\pm 0.007}$ & $0.817_{\pm 0.049}$ & $\boldsymbol{0.976}_{\pm 0.002}$ & $0.962_{\pm 0.001}$ & $0.896_{\pm 0.000}$ & $0.954_{\pm 0.001}$ & $0.943_{\pm 0.001}$ &  & $0.960_{\pm 0.002}$ & $0.966_{\pm 0.006}$ & $\underline{\boldsymbol{0.994}}_{\pm 0.002}$ \\
\bottomrule
\end{tabular}

    }
\end{table*}

\subsection{Fidelity Metrics Implementation}
\label{app:setup:fidelity}

\begin{table*}[t]
    \caption{Performance comparison between different models in terms of DCR. $p$-values less than $0.05$ is highlighted in red, indicating a high risk of privacy leakage.}
    \label{tab:dcr}
    \setlength{\tabcolsep}{1pt}
    \resizebox{\linewidth}{!}{
    \begin{tabular}{l|cccccccc>{\columncolor{red!10}}c>{\columncolor{red!10}}c>{\columncolor{red!10}}c>{\columncolor{red!10}}c>{\columncolor{red!10}}c>{\columncolor{red!10}}c}
\toprule
Dataset & CTAB+ & TTVAE & TabSyn & FD & GReaT & RTF & TabMT & TabuLa & TTF-S & TTF-L & TTF-NP \\
\midrule
adult & $1.000_{\pm 0.000}$ & $1.000_{\pm 0.000}$ & $1.000_{\pm 0.000}$ & $1.000_{\pm 0.000}$ & $0.857_{\pm 0.000}$ & \textcolor{red}{$0.000_{\pm 0.000}$} & $0.998_{\pm 0.002}$ & \textcolor{red}{$0.000_{\pm 0.000}$} & $1.000_{\pm 0.000}$ & $1.000_{\pm 0.000}$ & $1.000_{\pm 0.000}$ \\
bank & $1.000_{\pm 0.000}$ & $1.000_{\pm 0.000}$ & $1.000_{\pm 0.000}$ & $0.394_{\pm 0.082}$ & $1.000_{\pm 0.000}$ & $1.000_{\pm 0.000}$ & $1.000_{\pm 0.000}$ & \textcolor{red}{$0.000_{\pm 0.000}$} & $1.000_{\pm 0.000}$ & $1.000_{\pm 0.000}$ & $1.000_{\pm 0.000}$ \\
boston & $1.000_{\pm 0.000}$ & $1.000_{\pm 0.000}$ & $0.941_{\pm 0.040}$ & $0.996_{\pm 0.005}$ & $1.000_{\pm 0.000}$ & $0.508_{\pm 0.295}$ & $1.000_{\pm 0.000}$ & $0.754_{\pm 0.121}$ & $1.000_{\pm 0.000}$ & $1.000_{\pm 0.000}$ & $1.000_{\pm 0.000}$ \\
breast & $1.000_{\pm 0.000}$ & $1.000_{\pm 0.000}$ & $0.231_{\pm 0.105}$ & \textcolor{red}{$0.003_{\pm 0.001}$} & $0.082_{\pm 0.000}$ & \textcolor{red}{$0.002_{\pm 0.001}$} & $0.259_{\pm 0.064}$ & \textcolor{red}{$0.000_{\pm 0.000}$} & $0.990_{\pm 0.012}$ & $0.996_{\pm 0.005}$ & $0.332_{\pm 0.470}$ \\
credit & $1.000_{\pm 0.000}$ & $1.000_{\pm 0.000}$ & $0.186_{\pm 0.141}$ & \textcolor{red}{$0.005_{\pm 0.003}$} & $0.883_{\pm 0.000}$ & \textcolor{red}{$0.000_{\pm 0.000}$} & $0.824_{\pm 0.183}$ & \textcolor{red}{$0.000_{\pm 0.000}$} & $0.908_{\pm 0.107}$ & $0.676_{\pm 0.455}$ & \textcolor{red}{$0.000_{\pm 0.000}$} \\
diabetes & $1.000_{\pm 0.000}$ & $0.753_{\pm 0.288}$ & \textcolor{red}{$0.029_{\pm 0.024}$} & \textcolor{red}{$0.000_{\pm 0.000}$} & $0.998_{\pm 0.000}$ & $0.077_{\pm 0.076}$ & $0.876_{\pm 0.149}$ & \textcolor{red}{$0.000_{\pm 0.000}$} & $0.867_{\pm 0.158}$ & $0.603_{\pm 0.310}$ & $0.333_{\pm 0.471}$ \\
iris & $1.000_{\pm 0.000}$ & $0.382_{\pm 0.229}$ & $0.552_{\pm 0.099}$ & \textcolor{red}{$0.049_{\pm 0.032}$} & $0.764_{\pm 0.000}$ & $0.638_{\pm 0.226}$ & $0.824_{\pm 0.191}$ & - & $0.662_{\pm 0.421}$ & $0.214_{\pm 0.022}$ & $0.086_{\pm 0.096}$ \\
qsar & $1.000_{\pm 0.000}$ & $1.000_{\pm 0.000}$ & $1.000_{\pm 0.000}$ & $0.977_{\pm 0.019}$ & $1.000_{\pm 0.000}$ & $0.326_{\pm 0.088}$ & $1.000_{\pm 0.000}$ & $1.000_{\pm 0.000}$ & $1.000_{\pm 0.000}$ & $0.956_{\pm 0.063}$ & \textcolor{red}{$0.000_{\pm 0.000}$} \\
wdbc & $1.000_{\pm 0.000}$ & $1.000_{\pm 0.000}$ & $0.092_{\pm 0.111}$ & \textcolor{red}{$0.000_{\pm 0.000}$} & $1.000_{\pm 0.000}$ & $0.998_{\pm 0.001}$ & $1.000_{\pm 0.000}$ & - & $0.967_{\pm 0.021}$ & $0.261_{\pm 0.179}$ & \textcolor{red}{$0.000_{\pm 0.000}$} \\
\bottomrule
\end{tabular}

    }
\end{table*}
\begin{table*}[t]
    \centering
    \caption{Raw MLE performances as shown in Table~\ref{tab:ml-utility}, but experiments with privacy risks as indicated in Table~\ref{tab:dcr} highlighted and excluded from ranking. TTF-NM is not shown because it is not intended for a balanced objective. Baselines with more than half of the datasets having privacy concerns are removed.}
    \vspace{-0.3em}
    \label{tab:balance}
    \setlength{\tabcolsep}{1.1pt}
    \resizebox{\linewidth}{!}{

\begin{tabular}{ll>{\columncolor{gray!20}}ccccccc>{\columncolor{red!10}}c>{\columncolor{red!10}}c}
\toprule
Dataset & ML & real & CTAB+& TTVAE & TabSyn & GReaT & RTF & TabMT & TTF-S & TTF-L \\
\midrule
\multirow{3}{*}{adult} & LN & 0.914 & $0.904_{\pm 0.001}$ & $0.875_{\pm 0.010}$ & $0.910_{\pm 0.002}$ & $\boldsymbol{0.911}_{\pm 0.000}$ & \textcolor{red}{$0.911_{\pm 0.000}$} & $0.911_{\pm 0.000}$ & $0.906_{\pm 0.001}$ & $\boldsymbol{\underline{0.911}}_{\pm 0.001}$ \\
 & RF & 0.910 & $0.902_{\pm 0.002}$ & $0.869_{\pm 0.011}$ & $0.905_{\pm 0.003}$ & $0.905_{\pm 0.000}$ & \textcolor{red}{$0.908_{\pm 0.000}$} & $\underline{\boldsymbol{0.907}}_{\pm 0.002}$ & $0.901_{\pm 0.001}$ & $\boldsymbol{0.905}_{\pm 0.000}$ \\
 & XGB & 0.914 & $0.905_{\pm 0.001}$ & $0.870_{\pm 0.012}$ & $0.910_{\pm 0.003}$ & $0.911_{\pm 0.000}$ & \textcolor{red}{$0.912_{\pm 0.001}$} & $\boldsymbol{0.911}_{\pm 0.001}$ & $0.906_{\pm 0.001}$ & $\boldsymbol{\underline{0.911}}_{\pm 0.001}$ \\
\midrule
\multirow{3}{*}{bank} & LN & 0.907 & $0.901_{\pm 0.003}$ & $0.885_{\pm 0.004}$ & $0.900_{\pm 0.005}$ & $\underline{\boldsymbol{0.907}}_{\pm 0.000}$ & $0.904_{\pm 0.000}$ & $0.862_{\pm 0.005}$ & $0.906_{\pm 0.000}$ & $\boldsymbol{0.906}_{\pm 0.000}$ \\
 & RF & 0.930 & $0.902_{\pm 0.002}$ & $0.884_{\pm 0.009}$ & $0.907_{\pm 0.006}$ & $0.907_{\pm 0.001}$ & $\boldsymbol{0.913}_{\pm 0.001}$ & $0.886_{\pm 0.011}$ & $0.913_{\pm 0.002}$ & $\underline{\boldsymbol{0.918}}_{\pm 0.001}$ \\
 & XGB & 0.936 & $0.906_{\pm 0.000}$ & $0.877_{\pm 0.011}$ & $0.913_{\pm 0.007}$ & $0.910_{\pm 0.001}$ & $0.918_{\pm 0.002}$ & $0.892_{\pm 0.007}$ & $\boldsymbol{0.919}_{\pm 0.002}$ & $\underline{\boldsymbol{0.925}}_{\pm 0.003}$ \\
\midrule
\multirow{3}{*}{boston} & LN & 0.590 & $0.264_{\pm 0.034}$ & $0.000_{\pm 0.079}$ & $\boldsymbol{0.533}_{\pm 0.036}$ & $0.397_{\pm 0.000}$ & $0.529_{\pm 0.028}$ & $0.409_{\pm 0.015}$ & $0.525_{\pm 0.035}$ & $\underline{\boldsymbol{0.575}}_{\pm 0.011}$ \\
 & RF & 0.662 & $0.129_{\pm 0.015}$ & $0.124_{\pm 0.068}$ & $\boldsymbol{0.658}_{\pm 0.025}$ & $0.281_{\pm 0.001}$ & $0.584_{\pm 0.006}$ & $0.439_{\pm 0.096}$ & $0.597_{\pm 0.038}$ & $\underline{\boldsymbol{0.694}}_{\pm 0.041}$ \\
 & XGB & 0.701 & $0.097_{\pm 0.078}$ & $0.211_{\pm 0.121}$ & $\boldsymbol{0.647}_{\pm 0.017}$ & $0.277_{\pm 0.014}$ & $0.578_{\pm 0.056}$ & $0.446_{\pm 0.115}$ & $0.594_{\pm 0.061}$ & $\underline{\boldsymbol{0.665}}_{\pm 0.026}$ \\
\midrule
\multirow{3}{*}{breast} & LN & 0.985 & $0.913_{\pm 0.091}$ & $0.963_{\pm 0.006}$ & $\underline{\boldsymbol{0.987}}_{\pm 0.001}$ & $0.985_{\pm 0.000}$ & \textcolor{red}{$0.987_{\pm 0.002}$} & $0.983_{\pm 0.004}$ & $0.981_{\pm 0.005}$ & $\boldsymbol{0.987}_{\pm 0.002}$ \\
 & RF & 0.985 & $0.968_{\pm 0.012}$ & $0.979_{\pm 0.004}$ & $\boldsymbol{0.981}_{\pm 0.002}$ & $0.979_{\pm 0.004}$ & \textcolor{red}{$0.980_{\pm 0.000}$} & $0.978_{\pm 0.002}$ & $0.978_{\pm 0.001}$ & $\underline{\boldsymbol{0.983}}_{\pm 0.002}$ \\
 & XGB & 0.984 & $0.958_{\pm 0.025}$ & $0.970_{\pm 0.009}$ & $\underline{\boldsymbol{0.987}}_{\pm 0.001}$ & $0.982_{\pm 0.002}$ & \textcolor{red}{$0.982_{\pm 0.002}$} & $0.982_{\pm 0.002}$ & $0.982_{\pm 0.002}$ & $\boldsymbol{0.983}_{\pm 0.003}$ \\
\midrule
\multirow{3}{*}{credit} & LN & 0.836 & $0.538_{\pm 0.093}$ & $0.500_{\pm 0.000}$ & $\boldsymbol{0.780}_{\pm 0.017}$ & $0.665_{\pm 0.000}$ & \textcolor{red}{$0.801_{\pm 0.006}$} & $0.773_{\pm 0.038}$ & $\underline{\boldsymbol{0.816}}_{\pm 0.020}$ & $0.755_{\pm 0.011}$ \\
 & RF & 0.837 & $0.505_{\pm 0.097}$ & $0.500_{\pm 0.000}$ & $\underline{\boldsymbol{0.791}}_{\pm 0.007}$ & $0.722_{\pm 0.014}$ & \textcolor{red}{$0.820_{\pm 0.010}$} & $0.763_{\pm 0.019}$ & $\boldsymbol{0.781}_{\pm 0.026}$ & $0.741_{\pm 0.027}$ \\
 & XGB & 0.844 & $0.509_{\pm 0.088}$ & $0.500_{\pm 0.000}$ & $\boldsymbol{0.795}_{\pm 0.014}$ & $0.739_{\pm 0.006}$ & \textcolor{red}{$0.797_{\pm 0.039}$} & $0.761_{\pm 0.030}$ & $\underline{\boldsymbol{0.796}}_{\pm 0.011}$ & $0.764_{\pm 0.014}$ \\
\midrule
\multirow{3}{*}{diabetes} & LN & 0.884 & $0.817_{\pm 0.028}$ & $0.880_{\pm 0.003}$ & \textcolor{red}{$0.883_{\pm 0.006}$} & $0.858_{\pm 0.000}$ & $0.878_{\pm 0.003}$ & $0.855_{\pm 0.014}$ & $\underline{\boldsymbol{0.890}}_{\pm 0.010}$ & $\boldsymbol{0.886}_{\pm 0.001}$ \\
 & RF & 0.869 & $0.802_{\pm 0.014}$ & $0.849_{\pm 0.004}$ & \textcolor{red}{$0.844_{\pm 0.012}$} & $0.824_{\pm 0.012}$ & $0.841_{\pm 0.010}$ & $0.845_{\pm 0.009}$ & $\boldsymbol{0.855}_{\pm 0.014}$ & $\underline{\boldsymbol{0.859}}_{\pm 0.010}$ \\
 & XGB & 0.867 & $0.811_{\pm 0.005}$ & $\boldsymbol{0.851}_{\pm 0.001}$ & \textcolor{red}{$0.847_{\pm 0.009}$} & $0.831_{\pm 0.003}$ & $0.849_{\pm 0.010}$ & $0.840_{\pm 0.009}$ & $\underline{\boldsymbol{0.852}}_{\pm 0.015}$ & $0.848_{\pm 0.017}$ \\
\midrule
\multirow{3}{*}{iris} & LN & 1.000 & $0.272_{\pm 0.068}$ & $0.996_{\pm 0.006}$ & $\boldsymbol{0.999}_{\pm 0.002}$ & $0.985_{\pm 0.000}$ & $0.983_{\pm 0.015}$ & $\underline{\boldsymbol{1.000}}_{\pm 0.000}$ & $0.971_{\pm 0.018}$ & $0.997_{\pm 0.005}$ \\
 & RF & 1.000 & $0.396_{\pm 0.099}$ & $0.992_{\pm 0.012}$ & $\underline{\boldsymbol{1.000}}_{\pm 0.000}$ & $\underline{\boldsymbol{1.000}}_{\pm 0.000}$ & $0.993_{\pm 0.005}$ & $0.993_{\pm 0.005}$ & $0.999_{\pm 0.002}$ & $0.988_{\pm 0.017}$ \\
 & XGB & 1.000 & $0.211_{\pm 0.042}$ & $0.999_{\pm 0.002}$ & $\underline{\boldsymbol{1.000}}_{\pm 0.000}$ & $\underline{\boldsymbol{1.000}}_{\pm 0.000}$ & $0.998_{\pm 0.003}$ & $\underline{\boldsymbol{1.000}}_{\pm 0.000}$ & $\underline{\boldsymbol{1.000}}_{\pm 0.000}$ & $\underline{\boldsymbol{1.000}}_{\pm 0.000}$ \\
\midrule
\multirow{3}{*}{qsar} & LN & 0.906 & $0.716_{\pm 0.085}$ & $0.669_{\pm 0.120}$ & $0.867_{\pm 0.012}$ & $0.673_{\pm 0.000}$ & $\boldsymbol{0.880}_{\pm 0.003}$ & $\underline{\boldsymbol{0.887}}_{\pm 0.005}$ & $0.874_{\pm 0.013}$ & $0.876_{\pm 0.006}$ \\
 & RF & 0.936 & $0.648_{\pm 0.036}$ & $0.666_{\pm 0.117}$ & $0.882_{\pm 0.006}$ & $0.616_{\pm 0.009}$ & $\underline{\boldsymbol{0.907}}_{\pm 0.007}$ & $0.880_{\pm 0.008}$ & $0.873_{\pm 0.006}$ & $\boldsymbol{0.884}_{\pm 0.004}$ \\
 & XGB & 0.921 & $0.731_{\pm 0.020}$ & $0.658_{\pm 0.114}$ & $0.860_{\pm 0.016}$ & $0.602_{\pm 0.020}$ & $\underline{\boldsymbol{0.897}}_{\pm 0.010}$ & $0.873_{\pm 0.009}$ & $0.862_{\pm 0.005}$ & $\boldsymbol{0.884}_{\pm 0.004}$ \\
\midrule
\multirow{3}{*}{wdbc} & LN & 0.993 & $0.929_{\pm 0.053}$ & $0.976_{\pm 0.019}$ & $\underline{\boldsymbol{0.992}}_{\pm 0.001}$ & $0.979_{\pm 0.000}$ & $\boldsymbol{0.988}_{\pm 0.003}$ & $0.984_{\pm 0.006}$ & $0.980_{\pm 0.007}$ & $0.979_{\pm 0.007}$ \\
 & RF & 0.976 & $0.919_{\pm 0.024}$ & $0.952_{\pm 0.041}$ & $\underline{\boldsymbol{0.984}}_{\pm 0.004}$ & $0.976_{\pm 0.003}$ & $0.980_{\pm 0.012}$ & $0.978_{\pm 0.003}$ & $\boldsymbol{0.981}_{\pm 0.001}$ & $0.975_{\pm 0.002}$ \\
 & XGB & 0.990 & $0.930_{\pm 0.021}$ & $0.949_{\pm 0.035}$ & $0.986_{\pm 0.002}$ & $0.973_{\pm 0.003}$ & $\underline{\boldsymbol{0.988}}_{\pm 0.002}$ & $0.982_{\pm 0.004}$ & $\boldsymbol{0.987}_{\pm 0.003}$ & $0.985_{\pm 0.006}$ \\
\bottomrule
\end{tabular}

    }
\end{table*}

Fidelity metrics are calculated using SDMetrics~\citep{sdmetrics}'s public SDK.

\subsection{Distance to Closest Record (DCR) Metrics Implementation}
\label{app:setup:dcr-metric}

The way the DCR values are reported differ from paper to paper. Also, existing papers often report a value and claim some specific value to be an ideal value, overlooking the fact that better privacy usually sacrifices quality. Thus, in this paper, we instead apply a statistical test on DCR values to \textit{validate} privacy-preserving capability of the models, instead of \textit{evaluating} it, which is likely also more useful in practical privacy-preserving data sharing cases. 

We compare the DCR to the real training dataset $\mathbf{X}$ of a hold-out real dataset $\widehat{\mathbf{X}}$ with a synthetic dataset of the same size $\mathbf{X}'$, and privacy-perserving means that DCRs calculated on the latter is not smaller than DCRs calculated on the former. We test this using Mann-Whitney U Test~\citep{mwu}, with the null hypothesis $H_0$ being that the distance between $\widehat{\mathbf{X}}$ and $\mathbf{X}$ is greater than or equal to $\mathbf{X}'$ and $\mathbf{X}$, and if the $p$-value is less than 0.05, $\mathbf{X}'$ is closer to $\mathbf{X}$ than $\widehat{\mathbf{X}}$, implying a risk of privacy leakage. 

The data is preprocessed by quantile transformation (at 1000 quantiles, which is the same number of bins for the quantile quantizer in \framework) for numeric values and one-hot encoding for categorical values for distance calculation. Cosine distance (calculated using \verb|sklearn|) is used to evaluate the distance between records. As a reference, we calculate the cosine distances between the real test set and the real train set, obtaining the minimum distance per record for the test set. Similarly, we compute the cosine distances between synthetic data (with the same number of rows as the real test set) and the real train set, and also obtain the minimum distances. These distances are compared with the reference values to assess the similarity and privacy-preserving characteristics of the synthetic data.

\section{Raw Experimental Results}
\label{app:exp}

\subsection{Raw Synthetic Data Quality Results}
\label{app:exp:utility}
The raw experiment results for MLE is shown in Table~\ref{tab:ml-utility}. Out of the 27 MLE scores, \framework-NM is the best in 11, while the best baseline achieves the best in at most 4).

The raw experiment results for fidelity is shown in Table~\ref{tab:fidelity}.

\subsection{Raw Privacy Results}
\label{app:exp:privacy}

The raw experiment results for DCR is shown in Table~\ref{tab:dcr}.

\paragraph{Notes on the \textbf{N}o \textbf{M}ask (NM) Setting and Quality-privacy Dilemma.}
\label{app:exp:npsetting}
Summarizing the settings of the non-private version mentioned in Appendix~\ref{app:model}, TTF-NM in the paper means TTF-L model size with 0 masking rate and technically no early stopping.
Such a setting makes the model prone to memorizing exact values in the training data, which is verified by our experiment results on DCR.
Nevertheless, in a non-private setting, we regard this as permissible because privacy is not a concern.

In fact, for any model, synthetic data quality and its privacy are usually a dilemma, which is also consistent with our experiment results on different baselines.
For example, the baseline model with the best MLE result overall is TabuLa~\citep{tabula}, but it has a privacy issue in \textit{all} datasets. 
In particular, we found that TabuLa has a DCR of 0 on more than 50\% of the rows in many datasets, and a DCR of smaller than $1\times10^{-6}$ on more than 80\% of the rows. This indicates a severe memorization issue.
In comparison, the worst-performing baseline model we tested, CTAB-GAN+~\citep{ctabganp}, does not have any issue with privacy according to the DCR scores.

\paragraph{Balance of Quality and Utility.}
Given the dilemma between quality and utility, the general objective of a tabular data generator should be a good balance between quality and privacy, instead of optimizing both simultaneously.
Table~\ref{tab:balance} shows the MLE scores with privacy warnings. \framework demonstrates a good balance.

\subsection{Computation Time}
\label{app:exp:time}

\begin{table*}[t]
    \caption{Comparison between different models in terms of time taken for training and generation. The values are in unit of seconds.}
    \vspace{-0.3em}
    \label{tab:time}
    \setlength{\tabcolsep}{1pt}
    \resizebox{\linewidth}{!}{
    \begin{tabular}{llcccccccc>{\columncolor{red!10}}c>{\columncolor{red!10}}c>{\columncolor{red!10}}c>{\columncolor{red!10}}c>{\columncolor{red!10}}c}
\toprule
Dataset & Metric & CTAB+ & TTVAE & TabSyn & FD & GReaT & RTF & TabMT & TabuLa & TTF-S & TTF-L \\
\midrule
\multirow{2}{*}{adult} & Train & $787.379_{\pm 3.334}$ & $194.718_{\pm 0.085}$ & $1199.823_{\pm 146.744}$ & $3857.698_{\pm 10.002}$ & $10397.040_{\pm 9.203}$ & $1875.904_{\pm 34.663}$ & $6029.515_{\pm 3.139}$ & $2859.160_{\pm 6.858}$ & $1096.467_{\pm 143.847}$ & $2503.830_{\pm 564.922}$ \\
 & Generate & $0.429_{\pm 0.001}$ & $8.546_{\pm 0.029}$ & $2.409_{\pm 0.010}$ & $48.730_{\pm 13.758}$ & $138.023_{\pm 0.205}$ & $89.466_{\pm 4.301}$ & $416.206_{\pm 1.612}$ & $249.319_{\pm 0.485}$ & $41.482_{\pm 8.907}$ & $107.193_{\pm 24.859}$ \\
\midrule
\multirow{2}{*}{bank} & Train & $737.314_{\pm 0.138}$ & $192.617_{\pm 1.959}$ & $1865.473_{\pm 418.665}$ & $4172.922_{\pm 3.229}$ & $10295.769_{\pm 21.720}$ & $835.167_{\pm 1.619}$ & $7032.875_{\pm 4.217}$ & $3079.641_{\pm 2.237}$ & $1115.787_{\pm 108.995}$ & $2551.349_{\pm 724.774}$ \\
 & Generate & $1.126_{\pm 0.006}$ & $7.341_{\pm 0.015}$ & $3.009_{\pm 0.063}$ & $31.609_{\pm 0.373}$ & $149.872_{\pm 0.104}$ & $106.036_{\pm 0.080}$ & $333.882_{\pm 118.034}$ & $234.806_{\pm 0.044}$ & $24.972_{\pm 3.040}$ & $58.530_{\pm 16.181}$ \\
\midrule
\multirow{2}{*}{boston} & Train & $8.396_{\pm 0.048}$ & $4.664_{\pm 0.129}$ & $862.625_{\pm 48.241}$ & $47.040_{\pm 1.544}$ & $121.619_{\pm 0.090}$ & $49.693_{\pm 3.175}$ & $225.624_{\pm 0.310}$ & $47.811_{\pm 0.079}$ & $67.370_{\pm 43.192}$ & $229.997_{\pm 98.326}$ \\
 & Generate & $0.083_{\pm 0.001}$ & $0.172_{\pm 0.029}$ & $0.156_{\pm 0.003}$ & $0.447_{\pm 0.007}$ & $2.422_{\pm 0.003}$ & $3.059_{\pm 0.003}$ & $2.524_{\pm 0.001}$ & $3.012_{\pm 0.013}$ & $0.533_{\pm 0.028}$ & $1.566_{\pm 0.401}$ \\
\midrule
\multirow{2}{*}{breast} & Train & $8.479_{\pm 0.026}$ & $6.355_{\pm 0.592}$ & $1037.126_{\pm 281.166}$ & $32.364_{\pm 0.506}$ & $153.352_{\pm 0.134}$ & $25.537_{\pm 1.419}$ & $214.935_{\pm 0.461}$ & $33.100_{\pm 0.059}$ & $122.137_{\pm 6.839}$ & $344.247_{\pm 59.644}$ \\
 & Generate & $0.056_{\pm 0.008}$ & $0.144_{\pm 0.032}$ & $0.249_{\pm 0.003}$ & $0.771_{\pm 0.005}$ & $3.434_{\pm 0.016}$ & $0.759_{\pm 0.003}$ & $1.653_{\pm 0.013}$ & $6.652_{\pm 0.011}$ & $0.620_{\pm 0.033}$ & $1.463_{\pm 0.208}$ \\
\midrule
\multirow{2}{*}{credit} & Train & $8.400_{\pm 0.035}$ & $12.288_{\pm 2.809}$ & $726.006_{\pm 67.662}$ & $185.868_{\pm 5.293}$ & $277.555_{\pm 0.388}$ & $63.648_{\pm 0.758}$ & $703.238_{\pm 1.755}$ & $78.538_{\pm 0.338}$ & $179.304_{\pm 56.926}$ & $458.389_{\pm 89.990}$ \\
 & Generate & $0.043_{\pm 0.000}$ & $0.211_{\pm 0.115}$ & $0.167_{\pm 0.003}$ & $1.127_{\pm 0.010}$ & $5.845_{\pm 0.008}$ & $2.407_{\pm 0.171}$ & $5.434_{\pm 0.013}$ & $4.868_{\pm 0.014}$ & $0.604_{\pm 0.123}$ & $1.558_{\pm 0.227}$ \\
\midrule
\multirow{2}{*}{diabetes} & Train & $8.524_{\pm 0.407}$ & $6.673_{\pm 0.480}$ & $758.140_{\pm 41.333}$ & $30.324_{\pm 2.067}$ & $130.793_{\pm 0.077}$ & $37.372_{\pm 3.118}$ & $220.147_{\pm 0.667}$ & $40.229_{\pm 0.050}$ & $70.158_{\pm 26.658}$ & $242.176_{\pm 48.014}$ \\
 & Generate & $0.047_{\pm 0.001}$ & $0.064_{\pm 0.001}$ & $0.139_{\pm 0.000}$ & $0.383_{\pm 0.003}$ & $1.496_{\pm 0.008}$ & $1.504_{\pm 0.007}$ & $0.786_{\pm 0.000}$ & $3.994_{\pm 0.007}$ & $0.343_{\pm 0.062}$ & $1.169_{\pm 0.364}$ \\
\midrule
\multirow{2}{*}{iris} & Train & $7.267_{\pm 0.021}$ & $2.489_{\pm 0.005}$ & $620.864_{\pm 35.603}$ & $5.055_{\pm 1.403}$ & $24.975_{\pm 0.018}$ & $15.369_{\pm 1.242}$ & $32.846_{\pm 0.011}$ & \multirow{2}{*}{-} & $86.143_{\pm 41.970}$ & $126.315_{\pm 29.440}$ \\
 & Generate & $0.033_{\pm 0.006}$ & $0.025_{\pm 0.005}$ & $0.098_{\pm 0.002}$ & $0.095_{\pm 0.002}$ & $0.360_{\pm 0.015}$ & $0.225_{\pm 0.001}$ & $0.122_{\pm 0.031}$ &  & $0.209_{\pm 0.065}$ & $0.483_{\pm 0.078}$ \\
\midrule
\multirow{2}{*}{qsar} & Train & $18.389_{\pm 0.217}$ & $13.144_{\pm 1.576}$ & $694.890_{\pm 76.489}$ & $664.578_{\pm 3.603}$ & $566.658_{\pm 23.451}$ & $152.950_{\pm 0.819}$ & $1723.824_{\pm 2.596}$ & $188.475_{\pm 0.374}$ & $81.433_{\pm 14.458}$ & $334.639_{\pm 74.700}$ \\
 & Generate & $0.206_{\pm 0.006}$ & $0.432_{\pm 0.006}$ & $0.336_{\pm 0.005}$ & $2.608_{\pm 0.007}$ & $46.712_{\pm 3.062}$ & $10.997_{\pm 0.072}$ & $23.723_{\pm 0.025}$ & $20.261_{\pm 5.972}$ & $1.270_{\pm 0.094}$ & $4.118_{\pm 0.606}$ \\
\midrule
\multirow{2}{*}{wdbc} & Train & $10.795_{\pm 0.089}$ & $8.168_{\pm 0.171}$ & $631.546_{\pm 68.070}$ & $306.120_{\pm 3.189}$ & $265.340_{\pm 2.411}$ & $183.908_{\pm 1.047}$ & $643.120_{\pm 1.028}$ & \multirow{2}{*}{-} & $31.884_{\pm 4.836}$ & $169.320_{\pm 54.606}$ \\
 & Generate & $0.159_{\pm 0.001}$ & $0.254_{\pm 0.001}$ & $0.276_{\pm 0.005}$ & $1.267_{\pm 0.023}$ & $65.821_{\pm 1.106}$ & $9.936_{\pm 0.017}$ & $13.893_{\pm 0.068}$ &  & $0.585_{\pm 0.010}$ & $1.381_{\pm 0.278}$ \\
\bottomrule
\end{tabular}

    }
\end{table*}

The raw computation time of training and generation is shown in Table~\ref{tab:time}. 
\framework trains two separate models for cross-validation, and training stops mainly by the validation loss, and occasionally by a maximum number of steps, so the relative training time compared to other auto-regressive transformer baselines is higher, but when trained on large datasets, \framework shows its efficiency advantage more obviously.

\section{Limitations and Future Work}
\label{app:limitation}

\paragraph{Limited capability by transformer backbones.} 
Although this paper demonstrates a strong capability of \framework and a potential of comparable performance with a much smaller version, it still inherits the problems of the backbone transformer. In particular, \framework's tabular data representation effectively reduces the number of tokens to one to two times the number of columns, significantly smaller than the 5-10 times of some baseline auto-regressive transformers, but is still linear to the number of columns.
If the chosen transformer backbone has limited performance or impractical memory requirement for the sequence length induced by a dataset with a tremendous number of columns, the performance of \framework is subject to the capability of the transformer backbone. Fortunately, most transformers are believed to function where with a few thousand of tokens in a sequence, endorsing the encoding of tabular datasets of the number of columns up to thousands, which covers most practical use cases.
Moreover, the design of \framework makes the transformer backbone flexibly configurable, and any modern or future more powerful versions of transformers can be substituted in.

\paragraph{Optimizing tree-based model.} 
\frameworkname\space uses LightGBM fitted on the target column to introduce tabular inductive biases from tree-based models, While this is effective and easy to implement, modified tree-based models catered for guiding the generation may further improve the performance, such as using trees from a generative tree-based model (e.g. ARF), and selecting a better target column using some heuristics.

\paragraph{Smoothing quantized values.} 
The dual-quantization tokenization may be further optimized with some sampling around the quantile values if use case requires smoother values in continuous values. Adapted ordinal embeddings with continuous raw values as additional input can also be applied.
In this paper, we want to focus on showing the effectiveness of quantization and methods to handle quantized values. We leave the integration of raw continuous values on top of these methods for future works, as performance without raw continuous values is readily outstanding.
Moreover, using quantized values only makes it much easier to change the backbone transformer.

\paragraph{Masked vs. Auto-regressive transformers.} 
TabMT has demonstrated the effectiveness of a masked transformer with iterative decoding for tabular data generation, but TabMT does not provide ablation study result on masked vs. auto-regressive transformers. Thus, this approach may be used in place of a causal language model to further improve synthetic data quality. Further exploration could be interesting.
In this paper, we want to focus on the most intuitive and simple usage of generative transformers, namely, auto-regressive generation, as the performance of this simple case is readily outstanding.
Moreover, using the standard auto-regressive setting makes it much easier to integrate with other training optimizations available on open-source seq2seq trainers (e.g., Hugging Face).

\section{Broader Impact}
\vspace{-0.5em}
\label{app:impact}
\paragraph{Extension to general tabular tasks.} 
The idea of introducing tree-based models to transformers on tabular data may not be limited to generation tasks, but also applies to tasks like classification and regression. 
We envision tabular models designed for other tasks inspired by \framework's design of the integration of a tree-based model to transformer would improve its performance by taking the advantage of both.

\paragraph{General ordinal token space learning.}
The proposed methods to learn the ordinal token space, including the embedding and loss, can be applied to any other task involving ordinal tokens, especially when the ordinal tokens' relation is absolute and monotonic instead of relative and non-periodic.

\end{document}